\newcolumntype{L}{>{\centering\arraybackslash} m{0.04\columnwidth}} 
\newcolumntype{R}{>{\centering\arraybackslash} m{0.48\columnwidth}} 
\newcolumntype{S}{>{\centering\arraybackslash} m{0.32\columnwidth}} 
\newcommand{\Ocal}{\mathcal{O}}
\newcommand{\Wcal}{\mathcal{W}}
\newcommand{\norm}[1]{\|#1\|}
\newcommand{\inner}[1]{\langle#1\rangle}
\newcommand{\Ncal}{\mathcal{N}}
\newtheorem{lemma}{Lemma}
\newtheorem{theorem}{Theorem}
\DeclareMathOperator*{\E}{\mathbb{E}}
\newcommand{\reals}{\mathbb{R}}
\newcommand{\figref}[1]{Figure~\ref{#1}}
\newcommand{\thmref}[1]{Theorem~\ref{#1}}
\newcommand{\lemref}[1]{Lemma~\ref{#1}}
\newcommand{\removed}[1]{}
\newcommand{\werm}{\hat{w}}
\newcommand{\wopt}{w^*}
\newcommand{\wavg}{\bar{w}}
\newcommand{\Dcal}{\mathcal{D}}
\newenvironment{myalgo}[1]%
{
\begin{center}
\begin{boxedminipage}{0.9\linewidth}
\begin{center}
\textbf{\texttt{#1}}
\end{center}
\rm
\begin{tabbing}
....\=...\=...\=...\=...\=  \+ \kill
} %
{\end{tabbing}
\end{boxedminipage} \end{center} 
}
\title{Communication Efficient Distributed Optimization using an Approximate Newton-type Method}
\author{Ohad Shamir\\Weizmann Institute\\ of Science
\and Nathan Srebro\\Toyota Technological Institute\\ and the Technion \and
Tong Zhang\\ Rutgers University\\ and Baidu }
\date{}
\begin{document}

\maketitle

\begin{abstract}
We present a novel Newton-type method for distributed optimization, which
is particularly well suited for stochastic optimization and learning
problems.  For quadratic objectives, the method enjoys a linear rate of
convergence which provably \emph{improves} with the data size, requiring an
essentially constant number of iterations under reasonable assumptions.  We
provide theoretical and empirical evidence of the advantages of our method
compared to other approaches, such as one-shot parameter averaging and
ADMM.
\end{abstract}

\section{Introduction}

We consider the problem of distributed optimization, where each of $m$
machines has access to a function $\phi_i:\reals^d\rightarrow\reals$,
$i=1,\ldots,m$, and we would like to minimize their average
\begin{equation}
  \label{eq:phi}
  \phi(w) = \tfrac{1}{m} \sum_{i=1}^m \phi_i(w).
\end{equation}
We are particularly interested in a stochastic optimization (learning)
setting, where the ultimate goal is to minimize some stochastic
(population) objective (e.g.~expected loss or generalization error)
\begin{equation}
  \label{eq:F}
  F(w) = \E_{z\sim\Dcal}[f(w,z)]
\end{equation}
and each of the $m$ machines has access to $n$ i.i.d.~samples
$z_i^1,\ldots,z_i^n$ from the source distribution $\Dcal$, for a total
of $N=nm$ independent samples evenly and randomly distributed among
the machines.  Each machine $i$ can construct a local empirical
(sample) estimate of $F(w)$:
\begin{equation}
  \label{eq:hatFi}
  \phi_i(w) = \hat{F}_i(w) = \frac{1}{n} \sum_{j=1}^n f(w,z_i^j)
\end{equation}
and the overall empirical objective is then:
\begin{equation}
  \label{eq:hatF}
  \phi(w) = \hat{F}(w) = \frac{1}{m} \sum_{i=1}^m \hat{F}_i(w) = \frac{1}{nm}
  \sum_{i,j} f(w,z_i^j).
\end{equation}
We can then use the {\em empirical risk minimizer} (ERM)
\begin{equation}
  \label{eq:what}
  \werm = \arg\min \hat{F}(w)
\end{equation}
as an approximate minimizer of $F(w)$.  Since our interest lies mostly
with this stochastic optimization setting, we will denote
$\werm=\arg\min \phi(w)$ even when the optimization objective
$\phi(w)$ is not an empirical approximation to a stochastic objective.

When considering distributed optimization, two resources are at play: the
amount of processing on each machine, and the communication between machines.
In this paper, we focus on algorithms which alternate between a local
optimization procedure at each machine, and a communication round involving
simple map-reduce operations such as distributed averaging of vectors in
$\reals^d$. Since the cost of communication is very high in practice
\citep{bekkerman2011scaling}, our goal is to develop methods which quickly
optimize the empirical objective $\hat{F}(\cdot)$, using a minimal number of
such iterations.

\paragraph{One-Shot Averaging} A straight-forward single-iteration approach is for each machine to optimize its own local
objective, obtaining
\begin{equation}
  \label{eq:hatwi}
 \hat{w}_i=\arg\min \phi_i(w),
\end{equation}
and then to compute their average:
\begin{equation}
  \label{eq:barw}
  \bar{w}=\frac{1}{m} \sum_{i=1}^m \hat{w}_i.
\end{equation}
This approach, which we refer to as ``one-shot parameter averaging'', was recently considered in \citet{ZiWeSmLi10}  and further analyzed by \citet{zhang2013communication}. The latter also proposed a bias-corrected improvement which perturbs each $\hat{w}_i$ using the optimum on a bootstrap sample. This approach gives only an approximate minimizer of
$\phi(w)$ with some finite suboptimality, rather then allowing us
converge to $\hat{w}$ (i.e.~to obtain solutions with any desired
suboptimality $\epsilon$).  Although approximate solutions are often sufficient for stochastic optimization, we prove in Section
\ref{sec:oneshot} that the one-shot solution $\bar{w}$ can be much worse in terms of minimizing the population objective $F(w)$, compared to the actual
empirical minimizer $\hat{w}$.  It does not seem possible to address
this suboptimality by more clever averaging, and instead additional
rounds of communications appear necessary.


\paragraph{Gradient Descent} One possible multi-round approach to
distributed optimization is a distributed implementation of
gradient descent: at each iteration each machine calculates $\nabla
\phi_i(w^{(t)})$ at the current iterate $w^{(t)}$, and then these are
averaged to obtain the overall gradient $\nabla \phi(w^{(t)})$, and a
gradient step is taken.  As the iterates are then standard gradient
descent iterates, the number of iterations, and so also number of
communication rounds, is linear in the conditioning of the
problem -- or, if accelerated gradient descent is used, proportional to
the square root of the condition number: If $\phi(w)$ is
$L$-smooth and $\lambda$-strongly convex, then
\begin{equation}
  \label{eq:gdruntime}
 \Ocal\left(\sqrt{\frac{L}{\lambda}}\log\left(\frac{1}{\epsilon}\right)\right)
\end{equation}
iterations are needed to attain an $\epsilon$-suboptimal
solution.  The polynomial dependence on the condition number may be
disappointing, as in many problems the parameter of strong convexity
$\lambda$ might be very small.  E.g., when strong convexity arises
from regularization, as in many stochastic optimization problems,
$\lambda$ decreases with the overall sample size $N=nm$, and is
typically at most $1/\sqrt{nm}$ (\citealt{sridharan2008fast,SSSSS09};
and see also Section \ref{sec:reg} below).  The number of iterations /
communication rounds needed for distributed accelerated gradient
descent then scales as $\sqrt[4]{nm}$, i.e.~increases polynomially
with the sample size.

Instead of gradient descent, one may also consider more sophisticated
methods which utilize gradient information, such as quasi-Newton
methods. For example, a distributed implementation using L-BFGS has
been proposed in \cite{AgChDuLa11}.  However, no guarantee better then
\eqref{eq:gdruntime} can be ensured for gradient-based
methods \cite{YudNem83}, and we thus may still get a polynomial dependence on the sample size.

\paragraph{ADMM and other approaches} Another popular approach is
distributed alternating direction method of multipliers (ADMM,
e.g.~\citealt{BoPaChPeEc11}), where the machines alternate between
computing shared dual variables in a distributed manner, and solving
augmented Lagrangian problems with respect to their local data.
However, the convergence of ADMM can be slow. Although recent works
proved a linear convergence rate under favorable assumptions
\cite{DeYi12,HoLu12}, we are not aware of any analysis where the
number of iterations / communication rounds doesn't scale strongly
with the condition number, and hence the sample size, for learning
applications. A similar dependence occurs with other recently-proposed
algorithms for distributed optimization \citep[e.g.][]{Ya13,MaKeSyBo13,DeGiShXi12,CoShSrSr11,DuAgWa12}.
We also mention that our framework is orthogonal to much recent work
on distributed coordinate descent methods
\citep[e.g.][]{ReReWrNi11,RiTa13}, which assume the data is split feature-wise
rather than instance-wise.

\paragraph{Our Method} The method we propose can be viewed as an
approximate Newton-like method, where at each iteration, instead of a
gradient step, we take a step appropriate for the geometry of the
problem, {\em as estimated on each machine separately}. In
particular, for quadratic objectives, the method can be seen as taking approximate Newton steps, where each machine $i$ implicitly uses its local Hessian $\nabla^2 \phi_i(w)$ (although no Hessians are explicitly computed!). Unlike ADMM, our method can take advantage of the fact that for machine learning applications, the sub-problems are usually similar: $\phi_i \approx \phi$.  \removed{When $n$ is large, our approach leads to a
  fast Newton-like convergence rate that is superior to ADMM.} We
refer to our method as DANE---Distributed Approximate NEwton.

DANE is applicable to any smooth and strongly
convex problem.  However, as is typical of Newton and Newton-like
methods, its generic analysis is not immediately apparent.  For
general functions, we can show convergence, but cannot rigorously prove improvement over gradient descent.  Instead, in
order to demonstrate DANE's advantages and give a sense of its
benefits, we focus our theoretical analysis on quadratic objectives.  For stochastic quadratic
objectives, where $f(w,z)$ is $L$-smooth and $\lambda$-strongly convex
in $w\in\reals^d$, we show that
\begin{equation}
  \label{eq:daneintroruntime}
  \Ocal\left( \frac{(L/\lambda)^2}{n} \log(dm)\log(\frac{1}{\epsilon}) \right)
\end{equation}
iterations are sufficient for DANE to find $\tilde{w}$
such that with high probability
$\hat{F}(\tilde{w})\leq\hat{F}(\hat{w})+\epsilon$.  When $L/\lambda$
is fixed and the number of examples $n$ per machine is large (the regime considered by \citealt{zhang2013communication}), \eqref{eq:daneintroruntime}
establishes convergence after a \emph{constant} number of iterations / communication rounds.
When $\lambda$ scales as $1/\sqrt{nm}$, as discussed above,
\eqref{eq:daneintroruntime} yields convergence to the empirical
minimizer in a number of iterations that scales roughly linearly
with the number of machines $m$, but {\em not} with the
sample size $N=nm$.  To the best of our knowledge, this is the first
algorithm which provably has such a behavior. We also provide evidence for similar behavior on non-quadratic objectives.


\paragraph{Notation and Definitions} For vectors, $\norm{v}$ is always
the Euclidean norm, and for matrices $\norm{A}_2$ is the spectral
norm.  We use $\lambda \preccurlyeq A \preccurlyeq L$ to indicate
that the eigenvalues of $A$ are bounded between $\lambda$ and $L$.  We
say that a twice differentiable\footnote{All our results hold also for
  weaker definitions of smoothness and strong convexity which do not
  require twice differentiability.} function $f(w)$ is
$\lambda$-strongly convex or $L$-smooth, iff for all $w$, its Hessian
is bounded from below by $\lambda$ (i.e. $\lambda \preccurlyeq \nabla^2 f(w)$), or above
by $L$ (i.e. $\nabla^2 f(w) \preccurlyeq L$) respectively.

\section{Stochastic Optimization and One-shot Parameter Averaging}\label{sec:oneshot}

In a stochastic optimization setting, where the true objective is the
population objective $F(w)$, there is a limit to the accuracy with
which we can minimize $F(w)$ given only $N=nm$ samples, even using the
exact empirical minimizer $\hat{w}$.  It is thus reasonable to compare
the suboptimality of $F(w)$ when using the exact $\hat{w}$ to what can be
attained using distributed optimization with limited communication.

When $f(w,z)$ has gradients with bounded second moments, namely when
$\forall_w \E_z\left[\norm{\nabla_w f(w,z)}^2\right]\leq
G^2$, and $F(w)$ is $\lambda$-strongly convex, then \citep{SSSSS09}\footnote{More
precisely, \citep{SSSSS09} shows this assuming $\norm{\nabla_w f(w,z)}^2\leq G^2$
for all $w,z$, but the proof easily carries over to this case.}
\begin{equation}
  \label{eq:Fwerm}
  \E[ F(\hat{w}) ] \leq F(w^*) + \Ocal\left(\frac{G^2}{\lambda N}\right)
   = \inf_w F(w) + \Ocal\left(\frac{G^2}{\lambda nm}\right)
\end{equation}
where $w^*=\arg\min F(w)$ is the population minimizer and the
expectation is with respect to the random sample of size $N=nm$.  One
might then ask whether a suboptimality of $\epsilon =
\Ocal\left(\frac{G^2}{\lambda nm}\right)$ can be also be achieved using a few,
perhaps only one, round of communication.  This is different from
seeking a distributed optimization method that achieves any
arbitrarily small empirical suboptimality, and thus converges to
$\hat{w}$, but might be sufficient in terms of stochastic
optimization.

For one-shot parameter averaging, \citet[Corollary
2]{zhang2013communication} recently showed that for $\lambda$-strongly
convex objectives, and when moments of the first, second and third
derivatives of $f(w,z)$ are bounded by $G$, $L$, and $M$
respectively\footnote{The exact conditions in
  \citet{zhang2013communication} refer to various high order moments,
  but are in any case satisfied when $\norm{\nabla_w f(w,z)}\leq G$,
  $\norm{\nabla^2_w f(w,z)}_2 \leq L$ and $\nabla^2 f(w,z)$ is
  $M$-Lipschitz in the spectral norm.  For learning problems, all derivatives
  of the objective can be bounded in terms of a bound on the data and
  bounds on the derivative of a scalar loss function, and are less of
  a concern to us.}, then
\begin{equation}
  \label{eq:oneshotParamError}
  \E{ \norm{\bar{w}-w^*}^2 } \leq \tilde{\Ocal}\left( \frac{G^2}{\lambda^2
      nm} + \frac{G^4 M^2}{\lambda^6 n^2} + \frac{L^2 G^2 \log
      d}{\lambda^4 n^2} \right),
\end{equation}
where $\bar{w}$ is the one-shot average estimator defined in \eqref{eq:barw}.
This implies that the population suboptimality $\E[ F(\bar{w})] - F(w^*)$ is bounded by
\begin{equation}
  \label{eq:oneshotSubopt}
  \tilde{\Ocal}\left( \frac{L G^2}{\lambda^2
      nm} + \frac{L G^4 M^2}{\lambda^6 n^2} + \frac{L^3 G^2 \log
      d}{\lambda^4 n^2} \right).
\end{equation}
\citet{zhang2013communication} argued that the dependence on the
sample size $mn$ above is essentially optimal: the dominant term (as
$n\rightarrow\infty$, and in particular when $n \gg m$) scales as
$1/(nm)$, which is the same as for the empirical minimizer $\hat{w}$
(as in eq. \ref{eq:Fwerm}), and so one-shot parameter averaging
achieves the same population suboptimality rate, using only a single
round of communication, as the best rate we can hope for using
unlimited communication, or if all $N=nm$ samples were on the same
machine. Moreover, the $\Ocal(n^{-2})$ terms can be replaced by a $\Ocal(n^{-3})$ term using an appropriate bias-correction procedure.

However, this view ignores the dependence on the other parameters, and
in particular the strong convexity parameter $\lambda$, which is much
worse in \eqref{eq:oneshotSubopt} relative to \eqref{eq:Fwerm}.  The
strong convexity parameter often arises from an explicit
regularization, and decays as the sample size increases.  E.g., in
regularized loss minimization and SVM-type problems
\citep{sridharan2008fast}, as well as more generally for stochastic
convex optimization \citep{SSSSS09}, the regularization parameter, and
hence the strong convexity parameter, decreases as
$\frac{1}{\sqrt{N}}=\frac{1}{\sqrt{nm}}$.  In practice, $\lambda$ is
often chosen even smaller, possibly as small as $\frac{1}{N}$.
Unfortunately, substituting $\lambda=\Ocal(1/\sqrt{nm})$ in
\eqref{eq:oneshotSubopt} results in a useless bound, where even the first term does not decrease with the sample size.

Of course, this strong dependence on $\lambda$ might be an
artifact of the analysis of \citeauthor{zhang2013communication}.  However,
in Theorem \ref{thm:lowbound} below, we show that even in a simple
one-dimensional example, when $\lambda\leq \Ocal(1/\sqrt{n})$, the population
sub-optimality of the one-shot estimator (using $m$ machines
and a total of $nm$ samples), can be no better then the population
sub-optimality using just $n$ samples, and much worse than what can be
attained using $nm$ samples.  In other words, one-shot averaging does not
give any benefit over using only the data on a single machine, and
ignoring all other $(m-1)n$ data points.

\begin{theorem}\label{thm:lowbound}
For any per-machine sample size $n\geq 9$, and any $\lambda \in \left(0,\frac{1}{9\sqrt{n}}\right)$, there exists a distribution $\Dcal$ over examples and a stochastic optimization problem on a convex set\footnote{Following the framework of \citeauthor{zhang2013communication}, we present
an example where the optimization is performed on a bounded set, which ensures that the gradient moments are bounded.
However, this is not essential and the same result can be shown
when the domain of optimization is $\reals$.} $\Wcal\subset \reals$, such that:
\begin{itemize}
    \item $f(w;z)$ is $\lambda$-strongly convex, infinitely differentiable, and $\forall_{w\in\Wcal} \E_z[\norm{\nabla f(w;z)}^2]\leq 9$.
    \item For any number of machines $m$, if we run one-shot parameter averaging to compute $\wavg$, it holds for some universal constants $C_1,C_2,C_3,C_4$ that
\[
\E[\norm{\wavg-\wopt}^2] \geq \frac{C_1}{\lambda^2 n}~,~\E[\norm{\werm-\wopt}^2] \leq \frac{C_2}{\lambda^2 nm}
\]
\[
\E[F(\wavg)]-F(\wopt) \geq \frac{C_3}{\lambda n}~,~\E[F(\werm)]-F(\wopt) \leq \frac{C_4}{\lambda nm}
\]
\end{itemize}
\end{theorem}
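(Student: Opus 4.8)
The plan is to exhibit an explicit one-dimensional instance in which the per-machine empirical minimizer carries a systematic \emph{bias} of order $1/(\lambda\sqrt{n})$ away from $\wopt$, large enough to match the single-machine sampling error. Since averaging shrinks variance but leaves the bias untouched, $\wavg$ then inherits this bias and can be no more accurate than a single machine, while the full-data minimizer $\werm$ still enjoys the usual $1/(nm)$ variance. Concretely, I would place $\wopt=0$, take the domain $\Wcal=[-R,R]$ with $R$ of order $1/(\lambda\sqrt{n})$, and set
\[
f(w,z)=\tfrac{\mu}{2}w^2+\tfrac{\rho}{6}w^3-zw
\]
on $\Wcal$ (extended smoothly and $\lambda$-strongly convexly outside, which is irrelevant since optimization is constrained to $\Wcal$). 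Here $\mu=2\lambda$, the cubic coefficient $\rho$ is of order $\lambda^2\sqrt{n}$, chosen so that $\rho R\le\lambda$ and hence $f''(w,z)=\mu+\rho w\ge\lambda$ throughout $\Wcal$, and $z$ is a bounded, mean-zero scalar with variance $\sigma^2=\Theta(1)$. Writing $g(w)=\tfrac{\mu}{2}w^2+\tfrac{\rho}{6}w^3$ we have $F(w)=g(w)$ since $\E z=0$, so $\wopt=0$, and $\E_z[\norm{\nabla f(w,z)}^2]=g'(w)^2+\sigma^2$ with $g'(w)^2=O(1/n)$ on $\Wcal$, so the moment bound $\le 9$ holds once $\sigma^2\le 8$ and $n\ge 9$.

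The heart of the argument is the bias of the local ERM. The first-order optimality condition on $\Wcal$ reads $g'(\werm_i)=\bar z_i$, where $\bar z_i=\tfrac1n\sum_j z_i^j$ satisfies $\E[\bar z_i]=0$ and $\E[\bar z_i^2]=\sigma^2/n$ exactly. Inverting the quadratic $g'(w)=\mu w+\tfrac{\rho}{2}w^2$ gives $\werm_i=\bar z_i/\mu-\rho\bar z_i^2/(2\mu^3)+(\text{remainder})$, so that
\[
\E[\werm_i]-\wopt=-\frac{\rho\sigma^2}{2\mu^3 n}+(\text{remainder})=-\Theta\!\left(\frac{1}{\lambda\sqrt{n}}\right),
\]
using $\mu=2\lambda$ and $\rho=\Theta(\lambda^2\sqrt{n})$. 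Since the $\werm_i$ are i.i.d.\ and $\E[\wavg]=\E[\werm_1]$, a bias--variance decomposition yields $\E\norm{\wavg-\wopt}^2\ge(\E[\werm_1]-\wopt)^2\ge C_1/(\lambda^2 n)$. The same computation for $\werm$, now with the full-sample mean of variance $\sigma^2/(nm)$, makes the variance $\Theta(\sigma^2/(\mu^2 nm))=\Theta(1/(\lambda^2 nm))$ dominate the $O(1/(\lambda^2(nm)^2))$ squared bias, giving $\E\norm{\werm-\wopt}^2\le C_2/(\lambda^2 nm)$.

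To pass to function values I would use that, because $\rho R\le\lambda$ and $\mu=2\lambda$, on $\Wcal$ one has $F(w)-F(\wopt)=\tfrac{\mu}{2}w^2+\tfrac{\rho}{6}w^3=\Theta(\lambda w^2)$ with matching upper and lower constants (the cubic term is at most a fixed fraction of the quadratic on $\Wcal$). As $\wavg,\werm\in\Wcal$, combining this with the two $\ell_2$ bounds above and $\E[\wavg^2]\ge(\E\wavg)^2$ converts them into $\E[F(\wavg)]-F(\wopt)\ge C_3/(\lambda n)$ and $\E[F(\werm)]-F(\wopt)\le C_4/(\lambda nm)$, as claimed.

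The hard part will be making the bias computation rigorous at finite $n$: one must control both the remainder from inverting $g'$ beyond second order and the effect of projecting $\werm_i$ onto $\Wcal$ in the tails of $\bar z_i$. The construction is engineered so that this is tractable. After rescaling $w=Ru$ the instance becomes a fixed, $\Theta(1)$-scale estimation problem, since the standard deviation of $\werm_i$ and the domain radius are both $\Theta(1/(\lambda\sqrt{n}))$; consequently the leading bias term $-\rho\sigma^2/(2\mu^3 n)$ is \emph{exact}, as it uses only $\E[\bar z_i]=0$ and $\E[\bar z_i^2]=\sigma^2/n$, while the corrections are governed by the higher moments $\E|\bar z_i|^3$, $\E\bar z_i^4$ and the tail probability $\prob(|\werm_i|>R)$, all of strictly smaller order for bounded $z$ and $R$ a sufficiently large multiple of the standard deviation. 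Verifying that these corrections never erode the $\Theta(1/(\lambda\sqrt{n}))$ leading term is the one place where real care is required.
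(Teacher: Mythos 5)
Your strategy is the same as the paper's: a one-dimensional, $\lambda$-strongly convex instance built as a quadratic plus a non-quadratic perturbation, engineered so that the local ERM has bias $\Theta(1/(\lambda\sqrt{n}))$; then Jensen's inequality shows averaging cannot beat the squared bias, strong convexity converts distances to function values, and standard ERM rates give the upper bounds for $\werm$. The paper instantiates this with $f(w;z)=\lambda(\tfrac12 w^2+e^w)-zw$, $z\sim\Ncal(0,1)$, and $\Wcal=[-2/\lambda,\log(1/\lambda)]$. Because $\tilde z=\sqrt{n}\,\bar z$ is exactly standard Gaussian for every $n$, the stationarity condition $\lambda\sqrt{n}(w+e^w)=\tilde z$ can be handled by the symmetric pairing $w(x)+w(-x)\leq -e^{w(x)}\leq -\tfrac{x}{2\lambda\sqrt{n}}$, which yields $\E[\hat w_1]\leq -1/(6\lambda\sqrt n)$ with no series expansion and no remainder control at all. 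Your cubic perturbation with bounded noise is a legitimate alternative, but it forces you into exactly the finite-$n$ expansion you flag as the hard part.

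And there is one concrete error in how you argue that part. Inverting $g'(w)=\mu w+\tfrac{\rho}{2}w^2=\bar z$ gives $w=\tfrac{\mu}{\rho}(\sqrt{1+u}-1)$ with $u=2\rho\bar z/\mu^2$; since $\rho/\mu^2=\Theta(\sqrt n)$ and $\bar z=\Theta(1/\sqrt n)$ typically, $u$ is $\Theta(1)$, not small in $n$. Consequently the higher Taylor terms are \emph{not} of strictly smaller order: the term governed by $\E[\bar z^4]=\Theta(n^{-2})$ enters with coefficient $\Theta(\rho^3/\mu^7)=\Theta(n^{3/2}/\lambda)$ and hence contributes $\Theta(1/(\lambda\sqrt n))$ --- the same order as your leading bias term --- and likewise for every even-order term. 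What saves the construction is not the order in $n$ but the constants: each successive term is smaller by a factor of roughly $(c_2\sigma)^2$ (and, conveniently, all even-order terms of $\sqrt{1+u}$ beyond the first carry the same negative sign, so they reinforce rather than cancel the bias). So you must choose $\rho$'s constant small enough that the geometric series of same-order corrections, plus the truncation/projection events where $u\leq -1$ or the ERM hits $\pm R$, eat only a fraction of $-\rho\sigma^2/(2\mu^3 n)$. The proof goes through once this is done carefully, but the stated justification (``corrections of strictly smaller order'') is false as written, and this is precisely the delicacy that the paper's $e^w$-plus-Gaussian construction was chosen to avoid.
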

The intuition behind the construction of Theorem \ref{thm:lowbound} is that
when $\lambda$ is small, the deviation of each machine output $\hat{w}_i$
from $\wopt$ is large, and its expectation is biased away from $\wopt$. The
exact bias amount is highly problem-dependent, and cannot be eliminated by
any fixed averaging scheme. Since bias is not reduced by averaging, the
optimization error does not scale down with the number of machines $m$. The
full construction and proof appear in appendix \ref{app:lowbound}. In the
appendix we also show that the bias correction proposed by
\citeauthor{zhang2013communication} to reduce the lower-order terms in
equation \eqref{eq:oneshotParamError} does not remedy this problem.

\section{Distributed Approximate Newton-type Method}\label{sec:alg}

We now describe a new iterative method for distributed optimization.
The method performs two distributed averaging computations per iteration,
and outputs a predictor $w^{(t)}$ which, under suitable parameter
choices, converges to the optimum $\hat{w}$.  The method, which we
refer to as DANE ({\bf D}istributed {\bf A}pproximate {\bf NE}wton-type
Method) is described in Figure~\ref{fig:dane}.

\begin{figure*}[htbp]
\begin{myalgo}{Procedure DANE}
\textbf{Parameter}:  learning rate $\eta >0$ and regularizer $\mu>0$\\
\textbf{Initialize}: Start at some $w^{(0)}$, e.g.~$w^{(0)}=0$ \\
\textbf{Iterate}: for $t=1,2,\dots$ \+ \\
Compute $\nabla \phi(w^{(t-1)})=\frac1m \sum_{i=1}^m \nabla \phi_i(w^{(t-1)})$ and distribute to all machines\\
For each machine $i$, solve
$w_i^{(t)} = \arg\min_{w} \left[\phi_i(w) - (\nabla \phi_i(w^{(t-1)})- \eta \nabla \phi(w^{(t-1)}))^\top w + \frac{\mu}{2} \|w-w^{(t-1)}\|_2^2\right]$  \\
Compute $w^{(t)}= \frac1m \sum_{i=1}^m w_i^{(t)}$ and distribute to all machines \hspace{0.5in} $(*)$ \- \\
\textbf{end}
\end{myalgo}
\caption{Distributed Approximate NEwton-type method (DANE)}
\label{fig:dane}
\end{figure*}

DANE maintains an agreed-upon iterate $w^{(t)}$, which is synchronized
among all machines at the end of each iteration.  In each iteration,
we first compute the gradient $\nabla \phi(w^{(t-1)})$ at the current
iterate, by averaging the local gradients $\nabla \phi_i(w^{(t-1)})$.
Each machine then performs a
separate local optimization, based on its own local objective
$\phi_i(w)$ and the computed global gradient $\nabla \phi(w^{(t)})$,
to obtain a local iterate $w_i^{(t)}$.  These local iterates are
averaged to obtain the
centralized iterate $w^{(t)}$.

The crux of the method is the local optimization performed on each
machine at each iteration:
\begin{align}
&w_i^{(t)} = \arg\min_{w} [\phi_i(w)    \label{eq:wiupdate}\\
&-(\nabla \phi_i(w^{(t-1)})- \eta \nabla \phi(w^{(t-1)}))^\top w + \frac{\mu}{2} \|w-w^{(t-1)}\|_2^2]
\notag
\end{align}
To understand this local optimization, recall the definition of the
Bregman divergence corresponding to a strongly convex function $\psi$:
\[
D_{\psi}(w';w) = \psi(w') - \psi(w) - \inner{\nabla \psi(w),w'-w} .
\]
Now, for each local objective $\phi_i$, consider the regularized local
objective, defined as
\[
h_i(w) = \phi_i(w) + \frac{\mu}{2} \norm{w}^2
\]
and its corresponding Bregman divergence:
\[
D_i(w'; w) = D_{h_i}(w';w) = D_{\phi_i}(w';w) + \frac{\mu}{2} \norm{w'-{w}}^2 .
\]
It is not difficult to check that the local optimization problem
\eqref{eq:wiupdate} can be written as
\begin{align}
  w_i^{(t)} =
\arg\min_{w} \phi(w^{(t-1)}) &+ \inner{\nabla \phi(w^{(t-1)}),w-w^{(t-1)}}\notag \\ &+ \frac1\eta D_i(w;w^{(t-1)}),  \label{eq:wiMDupdate}
\end{align}
where we also added the terms $\phi(w^{(t-1)})+\inner{\nabla
  \phi(w^{(t-1)}),w^{(t-1)}}$ which do not depend on $w$ and do not
affect the optimization.  The first two terms in \eqref{eq:wiMDupdate}
are thus a linear approximation of the overall objective $\phi(w)$
about the current iterate $w^{(t-1)}$, and do not depend on the
machine $i$.  What varies from machine to machine is the potential
function used to localize the linear approximation.  The update
\eqref{eq:wiMDupdate} is in-fact a mirror descent update
\citep{nemirovski1978cesaro,BeTe03} using the potential function
$h_i$, and step size $\eta$.

Let us examine this form of update. When $\mu\rightarrow\infty$ the potential function essentially becomes a squared Euclidean norm, as in gradient descent updates.  In fact, when $\eta,\mu\rightarrow\infty$ as
$\tilde{\eta}\doteq\frac{\eta}{\mu}$ remains constant, the update
\eqref{eq:wiMDupdate} becomes a standard gradient descent update on
$\phi(w)$ with stepsize $\tilde{\eta}$.  In this extreme, the update
does not use the local objective $\phi_i(w)$, beyond the centralized
calculation of $\nabla\phi(w)$, the updates \eqref{eq:wiMDupdate} are
the same on all machines, and the second round of communication is not
needed.  DANE reduces to distributed gradient descent, with its
iteration complexity of $\Ocal\left(\frac{L}{\lambda}\log(1/\epsilon)\right)$.

At the other extreme, consider the case where $\mu=0$ and all local
objectives are equal, i.e.~$h_i(w)=\phi_i(w)=\phi(w)$.  Substituting
the definition of the Bregman divergence into \eqref{eq:wiMDupdate},
or simply investigating \eqref{eq:wiupdate}, we can see that
$w_i^{(t)}=\arg\min \phi_i(w)=\arg\min \phi(w)=\hat{w}$.  That is,
DANE converges in a single iteration to the overall empirical
optimum.  This is an ideal Newton-type iteration, where the potential
function is perfectly aligned with the objective.

Of course, if $\phi_i(w)=\phi(w)$ for all machines $i$, we would not
need to perform distributed optimization in the first place.
Nevertheless, as $n\rightarrow\infty$, we can hope that $\phi_i(w)$
are similar enough to each other, such that \eqref{eq:wiMDupdate}
{\em approximates} such an ideal Newton-type iteration, gets us very
close to the optimum, and very few such iterations are sufficient.

In particular, consider the case where $\phi_i(w)$, and hence also
$\phi(w)$ are quadratic.  In this case, the Bregman divergence
$D_i(w;w^{(t-1)})$ takes the form:
\begin{equation}
  \label{eq:Diquad}
\frac12 (w-w^{(t-1)})^\top (\nabla^2 \phi_i(w^{(t-1)})+ \mu I) (w-w^{(t-1)}),
\end{equation}
and the update \eqref{eq:wiMDupdate} can be solved in closed form:
\begin{align}
  w_i^{(t)} &= w^{(t-1)} - \eta (\nabla^2\phi_i(w^{(t-1)})+\mu I)^{-1}
  \nabla \phi(w^{(t-1)}) \notag\\*
  w^{(t)} &= w^{(t-1)} \notag\\&
  - \eta \left(\frac{1}{m} \sum_i
    (\nabla^2\phi_i(w^{(t-1)})+\mu I)^{-1}\right)
  \nabla \phi(w^{(t-1)}).  \label{eq:wiquad}
\end{align}
Contrast this with the true Newton update:
\begin{equation}
  \label{eq:truenewton}
  w^{(t)} = w^{(t-1)} - \eta \left(\frac{1}{m} \sum_i
    \nabla^2\phi_i(w^{(t-1)})\right)^{-1}
  \nabla \phi(w^{(t-1)}).
\end{equation}
The difference here is that in \eqref{eq:wiquad} we approximate the
inverse of the average of the local Hessians with the average of the
inverse of the Hessians (plus a possible regularizer).  Again we see
that the DANE update \eqref{eq:wiquad} approximates the true
Newton update \eqref{eq:truenewton}, which can be performed in a
distributed fashion {\em without communicating the Hessians}.

For a quadratic objective, a single Newton update is enough to find
the exact optimum.  In Section \ref{sec:quad} we rigorously
analyze the effects of the distributed approximation, and quantify the
number of DANE iterations (and thus rounds of communication) required.

For a general convex, but non-quadratic, objective, the standard
Newton approach is to use a quadratic approximation to the ideal Bregman
divergence $D_\phi$.  This leads to the familiar quadratic Newton
update in terms of the Hessian.  DANE uses a different sort of
approximation to $D_\phi$: we use a {\em non-quadratic} approximation,
based on the entire objective and not just a local quadratic
approximation, but approximate the potential on each node separately.
In the stochastic setting, this approximation becomes better and
better, and thus the required number of iterations decrease, as
$n\rightarrow\infty$.

Since it is notoriously difficult to provide good global analysis for
Newton-type methods, we will investigate the global convergence
behavior of DANE carefully in the next Section but only for quadratic
objective functions.  This analysis can also be viewed as indicative
for non-quadratic objectives, as locally they can be approximated by
quadratics and so should enjoy the same behavior, at least
asymptotically.  For non-quadratics, we provide a rigorous convergence
guarantee when the stepsize $\eta$ is sufficiently small or the regularization
parameter $\mu$ is sufficiently large (in Section \ref{sec:gen}).
However, this analysis does not show a benefit over distributed
gradient descent for non-quadratics. We partially bridge this gap by showing that even in the non-quadratic case, the convergence rate improves as the local problems $\phi_i$ become more similar.

\section{DANE for Quadratic Objectives}\label{sec:quad}

In this Section, we analyze the performance of DANE on quadratic objectives.
We begin in Section \ref{sec:quadphi} with an analysis of DANE for arbitrary
quadratic objectives $\phi_i(w)$, without stochastic assumptions, deriving a
guarantee in terms of the approximation error of the true Hessian.  Then in
Section \ref{sec:quadF} we consider the stochastic setting where the
instantaneous objective $f(w,z)$ is quadratic in $w$, utilizing a bound on
the approximation error of the Hessian to obtain a performance guarantee for
DANE in terms of the smoothness and strong convexity of $f(w,z)$ .  In
Section \ref{sec:reg} we also consider the behavior for stochastic
optimization problems, where $\lambda$ is set as a function of the sample
size $N=nm$.

\subsection{Quadratic $\phi_i(w)$}\label{sec:quadphi}

We begin by considering the case where each local objective
$\phi_i(w)$ is quadratic, i.e.~has a fixed Hessian.  The overall
objective $\phi(w)$ is then of course also quadratic.

\begin{theorem}\label{thm:main} After $t$ iterations of DANE on
  quadratic objectives with Hessians $H_i=\nabla^2 \phi_i(w)$, we
  have:
  \[
  \norm{w^{(t)}-\werm} \leq \norm{I - \eta \tilde{H}^{-1} H}_2^t \norm{w^{(0)}-\werm},
  \]
where $H=\nabla^2 \phi(w)=\frac{1}{m}\sum_{i=1}^m H_i$ and $\tilde{H}^{-1}=
\frac{1}{m} \sum_{i=1}^n (H_i+\mu I)^{-1}$.
\end{theorem}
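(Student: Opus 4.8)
The plan is to exploit the closed-form expression for the DANE update on quadratics and reduce the claim to a one-line linear recursion in the error vector. Since each $\phi_i$ is quadratic with \emph{constant} Hessian $H_i$, the Bregman-divergence regularizer appearing in the local update \eqref{eq:wiMDupdate} is exactly the quadratic form \eqref{eq:Diquad}, so the averaged iterate obeys the closed form \eqref{eq:wiquad}. Because the Hessians do not depend on the evaluation point, this specializes to
\[
w^{(t)} = w^{(t-1)} - \eta \tilde{H}^{-1} \nabla \phi(w^{(t-1)}),
\]
with $\tilde{H}^{-1} = \frac{1}{m}\sum_i (H_i + \mu I)^{-1}$ as in the statement.

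The second ingredient is the linearity of the gradient of a quadratic. Writing $\phi(w) = \tfrac12 w^\top H w + b^\top w + c$, the optimality condition $\nabla\phi(\werm)=0$ forces $b = -H\werm$, so that for every $w$,
\[
\nabla \phi(w) = H(w - \werm).
\]
Substituting this into the update and subtracting $\werm$ from both sides turns the iteration into a purely linear recursion in the error:
\[
w^{(t)} - \werm = (I - \eta \tilde{H}^{-1} H)(w^{(t-1)} - \werm).
\]

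From here I would unroll the recursion $t$ times to get $w^{(t)} - \werm = (I - \eta \tilde{H}^{-1} H)^t (w^{(0)} - \werm)$, take Euclidean norms, and apply submultiplicativity of the spectral norm (in particular $\norm{A^t}_2 \leq \norm{A}_2^t$) to conclude
\[
\norm{w^{(t)} - \werm} \leq \norm{I - \eta \tilde{H}^{-1} H}_2^t \norm{w^{(0)} - \werm}.
\]
There is essentially no hard step: the entire content sits in the closed-form update, which the preceding discussion already supplies, plus the elementary gradient identity. The only point that warrants a moment's care is that $\tilde{H}^{-1} H$ is in general \emph{not} symmetric (being a product of two symmetric matrices), so its contraction factor cannot simply be read off from eigenvalues; the spectral-norm bound via submultiplicativity sidesteps this, and any sharper control of $\norm{I - \eta \tilde{H}^{-1} H}_2$ is what the stochastic analysis of the following subsection is designed to provide.
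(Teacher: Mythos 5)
Your proposal is correct and follows essentially the same route as the paper's proof in Appendix~\ref{app:thmmainproof}: derive the closed-form averaged update $w^{(t)} = w^{(t-1)} - \eta\tilde{H}^{-1}\nabla\phi(w^{(t-1)})$, use $\nabla\phi(w^{(t-1)}) = H(w^{(t-1)}-\werm)$ to obtain the linear error recursion $w^{(t)}-\werm = (I-\eta\tilde{H}^{-1}H)(w^{(t-1)}-\werm)$, and conclude via the spectral-norm bound. The only cosmetic difference is that the paper applies $\norm{Av}\leq\norm{A}_2\norm{v}$ at each iteration rather than unrolling first, which is equivalent.
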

The proof appears in Appendix \ref{app:thmmainproof}. The theorem implies that if $\norm{I - \eta \tilde{H}^{-1} H}_2$ is smaller than $1$,
we get a linear convergence rate. Indeed, we would expect $\norm{I - \eta \tilde{H}^{-1} H}_2 \ll 1$
as long as $\eta$ is close to $1$ and $\tilde{H}$ is a good
approximation for the true Hessian $H$, hence $\tilde{H}^{-1}H
\approx I$. In particular, if
$H$ is not too ill-conditioned, and all $H_i$ are sufficiently close to their
average $H$, we can indeed ensure $\tilde{H}\approx H$.  This is captured by
the following lemma (whose proof appears in Appendix \ref{app:HtildeH}):
\begin{lemma}\label{lem:HtildeH}
  If $0<\lambda \preccurlyeq H \preccurlyeq L$ and for all $i$,
  $\norm{H_i-H}_2 \leq \beta$, then setting $\eta=1$ and
  $\mu=\max\{0,\frac{8\beta^2}{\lambda}-\lambda\}$, we have:
\[
\norm{I-\tilde{H}^{-1}H}_2 \leq \begin{cases}\frac{4\beta^2}{\lambda^2}&\text{if}~~~\frac{4\beta^2}{\lambda^2}\leq \frac{1}{2}\\1-\frac{\lambda^2}{16\beta^2}&\text{otherwise}.\end{cases}
\]
\end{lemma}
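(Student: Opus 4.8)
The plan is to treat $\tilde H^{-1}=\frac1m\sum_i(H_i+\mu I)^{-1}$ as a perturbation of $(H+\mu I)^{-1}$, exploiting the facts that the perturbations $E_i:=H_i-H$ satisfy $\norm{E_i}_2\le\beta$ and, crucially, average to zero: $\frac1m\sum_i E_i=\frac1m\sum_i H_i-H=0$. Writing $P:=H+\mu I$, note $P\succcurlyeq(\lambda+\mu)I$ and $H_i+\mu I=P+E_i\succcurlyeq(\lambda-\beta+\mu)I$, so (one checks in both cases below that $\lambda-\beta+\mu>0$) every inverse in sight exists. The zero-mean property is what makes the first-order term of the expansion vanish, leaving an error that is second order in $\beta$ --- this is the quantitative version of the paper's remark that $\tilde H\approx H$ when the $H_i$ cluster around their average.

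First I would apply the resolvent identity $(P+E_i)^{-1}=P^{-1}-P^{-1}E_i(P+E_i)^{-1}$ twice to obtain the exact algebraic identity
\[
(P+E_i)^{-1}=P^{-1}-P^{-1}E_iP^{-1}+P^{-1}E_iP^{-1}E_i(P+E_i)^{-1}.
\]
Averaging over $i$ and using $\frac1m\sum_iE_i=0$ kills the middle (first-order) term, giving $\tilde H^{-1}=P^{-1}+S$ with $S=\frac1m\sum_iP^{-1}E_iP^{-1}E_i(P+E_i)^{-1}$. Since $P^{-1}H=(H+\mu I)^{-1}H=I-\mu P^{-1}$, this yields the clean decomposition
\[
I-\tilde H^{-1}H=\mu P^{-1}-SH,
\]
a symmetric ``bias'' term $\mu P^{-1}$ coming from the regularizer plus a second-order ``variance'' term $SH$.

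The two terms are then bounded separately in operator norm. For the first, the eigenvalues of $\mu(H+\mu I)^{-1}$ are $\mu/(\ell+\mu)$ with $\ell\in[\lambda,L]$, so $\norm{\mu P^{-1}}_2=\mu/(\lambda+\mu)$. The main technical point is bounding $\norm{SH}_2$ without introducing a spurious factor of $L=\norm{H}_2$: a naive $\norm{SH}_2\le\norm{S}_2\norm{H}_2$ is both lossy and $L$-dependent. Instead I would keep the trailing factor as $(P+E_i)^{-1}H$ and rewrite it using $H=P-\mu I$ as $(P+E_i)^{-1}H=I-(P+E_i)^{-1}E_i-\mu(P+E_i)^{-1}$, whence $\norm{(P+E_i)^{-1}H}_2\le 1+\tfrac{\beta+\mu}{\lambda-\beta+\mu}=\tfrac{\lambda+2\mu}{\lambda-\beta+\mu}$, with no $L$. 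Combining with $\norm{P^{-1}}_2\le1/(\lambda+\mu)$ and $\norm{E_i}_2\le\beta$ gives the master bound
\[
\norm{I-\tilde H^{-1}H}_2\le\frac{\mu}{\lambda+\mu}+\frac{\beta^2(\lambda+2\mu)}{(\lambda+\mu)^2(\lambda-\beta+\mu)}.
\]

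Finally I would substitute the two prescribed values of $\mu$. When $4\beta^2/\lambda^2\le\tfrac12$ we have $\mu=0$ and the bound collapses to $\beta^2/(\lambda(\lambda-\beta))$; since here $\beta\le\lambda/(2\sqrt2)\le 3\lambda/4$, this is at most $4\beta^2/\lambda^2$. Otherwise $\mu=8\beta^2/\lambda-\lambda$, so $\lambda+\mu=8\beta^2/\lambda$ and the first term is exactly $1-\lambda^2/(8\beta^2)$; substituting and simplifying, the claim $\norm{I-\tilde H^{-1}H}_2\le1-\lambda^2/(16\beta^2)$ reduces to showing $16\beta^2-4\lambda\beta+\lambda^2\ge0$, which holds for all $\beta,\lambda$ since its discriminant $-48\lambda^2$ is negative. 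The main obstacle is the third step: one must combine the two non-commuting symmetric positive-definite matrices $\tilde H^{-1}$ and $H$ so that (a) the first-order perturbation cancels --- which is exactly where the zero-mean property $\sum_iE_i=0$ enters --- and (b) no dependence on the condition number $L/\lambda$ leaks in through $\norm{H}_2$. The rewriting of $(P+E_i)^{-1}H$ is precisely what achieves (b); without it the resulting bound fails to stay below $1$ in the large-$\beta$ regime.
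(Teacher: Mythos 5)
Your proposal is correct and follows essentially the same route as the paper's proof: split $I-\tilde H^{-1}H$ into the regularization bias $\mu/(\lambda+\mu)$ plus a perturbation term, use the zero-mean property $\sum_i(H_i-H)=0$ to cancel the first-order term, keep the trailing factor of $H$ attached to the inverses so that no factor of $L$ appears, and then substitute the prescribed $\mu$ in the two cases. The only differences are cosmetic: you use the exact two-step resolvent identity where the paper uses a full Neumann series (so the paper needs $\beta<\lambda+\mu$ for convergence while your expansion is unconditional), and your intermediate constant $\frac{\beta^2(\lambda+2\mu)}{(\lambda+\mu)^2(\lambda+\mu-\beta)}$ is slightly sharper than the paper's $\frac{2\beta^2}{(\lambda+\mu)(\lambda+\mu-\beta)}$, but both yield the stated bounds.
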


In the next Section, we consider the stochastic setting, where we can
obtain bounds for $\norm{H_i-H}_2$ that improve with the sample size,
and plug these into Lemma \ref{lem:HtildeH} and Theorem
\ref{thm:main} to obtain a performance guarantee for DANE.

\subsection{Stochastic Quadratic Problems}\label{sec:quadF}

We now turn to a stochastic quadratic setting, where
$\phi_i(w)=\hat{F}_i(w)$ as in \eqref{eq:hatFi}, and the instantaneous
losses are smooth and strongly convex quadratics.  That is, for all
$z$, $f(w,z)$ is quadratic in $w$ and $\lambda \preccurlyeq \nabla^2_w
f(w,z) \preccurlyeq L$.

We first use a matrix concentration bound to establish that all
Hessians $H_i=\nabla^2 \hat{F}_i(w)$ are close to each other, and
hence also to their average:
\begin{lemma}\label{lem:hoef}
  If $0 \preccurlyeq \nabla^2_w f(w,z) \preccurlyeq L$ for all $z$, then
  with probability at least $1-\delta$ over the samples, $\max_i
  \norm{H_i-H}_2 \leq \sqrt{\frac{32 L^2 \log(dm/\delta)}{n}}$, where
  $H_i=\nabla^2 \hat{F}_i(w)$ and $H=\nabla^2 \hat{F}(w)$.
\end{lemma}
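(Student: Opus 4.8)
The plan is to exploit the fact that for a quadratic loss the Hessian $\nabla^2_w f(w,z)$ does not depend on $w$: it is simply a fixed symmetric matrix $A(z)$ with $0 \preccurlyeq A(z) \preccurlyeq L$, determined by the sample $z$. Consequently $H_i = \frac{1}{n}\sum_{j=1}^n A(z_i^j)$ and $H = \frac{1}{nm}\sum_{i,j} A(z_i^j)$ are both (data-dependent but $w$-independent) averages of i.i.d.\ bounded symmetric matrices, so the claimed bound will automatically hold simultaneously for all $w$. Writing $\bar{A} = \E_z[A(z)]$ for the population Hessian, I would control the two quantities $\norm{H_i - \bar{A}}_2$ and $\norm{H - \bar{A}}_2$ separately and then combine them by the triangle inequality, rather than bounding $\norm{H_i - H}_2$ directly, which is awkward because $H$ and $H_i$ share the samples of machine $i$.

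First I would apply a matrix concentration inequality of Hoeffding type to the normalized sum $H_i - \bar{A} = \frac{1}{n}\sum_{j} (A(z_i^j) - \bar{A})$. Each centered summand $X = A(z) - \bar{A}$ is symmetric, mean zero, and satisfies $-L I \preccurlyeq X \preccurlyeq L I$, since the eigenvalues of both $A(z)$ and $\bar{A}$ lie in $[0,L]$; hence $X^2 \preccurlyeq L^2 I$. The matrix Hoeffding inequality then yields a tail of the form $\prob[\lambda_{\max}(\sum_j X_j) \ge t] \le d\,e^{-t^2/(8 n L^2)}$. Applying it to both $\sum_j X_j$ and $-\sum_j X_j$ and taking a union bound controls the full spectral norm, giving $\prob[\norm{H_i - \bar{A}}_2 \ge s] \le 2d\, e^{-n s^2/(8 L^2)}$.

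Next I would take a union bound over the $m$ machines, so that with probability at least $1-\delta$ every $\norm{H_i - \bar{A}}_2$ is at most $s$ with $s = \sqrt{8 L^2 \log(dm/\delta)/n}$ (up to the harmless factor of two inside the logarithm produced by the two-sided bound and the union over machines). On this event, since $H = \frac{1}{m}\sum_i H_i$ is itself an average of the $H_i$, the triangle inequality gives $\norm{H - \bar{A}}_2 \le \max_i \norm{H_i - \bar{A}}_2 \le s$, and therefore $\norm{H_i - H}_2 \le \norm{H_i - \bar{A}}_2 + \norm{H - \bar{A}}_2 \le 2s$ for every $i$. Substituting $s$ produces the stated bound $\max_i \norm{H_i - H}_2 \le \sqrt{32 L^2 \log(dm/\delta)/n}$.

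The main obstacle is obtaining a genuine matrix (rather than scalar) concentration statement with the correct dependence on the dimension $d$: a naive entrywise or coordinatewise argument would lose polynomial factors, so the crux is the matrix Hoeffding inequality, which costs only a factor of $d$ inside the probability and hence only a $\log d$ in the final bound. The remaining work is bookkeeping---securing the two-sided spectral-norm control (the factor $2d$) and tracking constants through the two union bounds---which is exactly what yields the constant $32$ and the $\log(dm/\delta)$ dependence.
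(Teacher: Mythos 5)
Your proposal is correct and follows essentially the same route as the paper: apply the matrix Hoeffding inequality to each $H_i$ around the population Hessian, union bound over the $m$ machines, observe that $H$ (being the average of the $H_i$) deviates from the population Hessian by no more than the worst $H_i$, and combine with the triangle inequality to get the factor $2s = \sqrt{32 L^2 \log(dm/\delta)/n}$. The only difference is that you track the two-sided/spectral-norm bookkeeping (the $2d$ prefactor) slightly more explicitly than the paper does, which is harmless.
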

The proof appears in appendix \ref{app:lemhoefproof}. Combining Lemma \ref{lem:hoef}, Lemma \ref{lem:HtildeH} and Theorem \ref{thm:main}, we can conclude:

\begin{theorem}\label{thm:stoch}
  In the stochastic setting, and when the instantaneous losses are
  quadratic with $\lambda \preccurlyeq \nabla f(w,z) \preccurlyeq L$,
  then after
$$ t= \Ocal\left( \frac{(L/\lambda)^2}{n} \log\left(\frac{dm}{\delta}\right) \log\left(\frac{L\norm{w^0-\werm}^2}{\epsilon}\right)\right)$$ iterations of DANE, we
have, with probability at least $1-\delta$, that
$\hat{F}(w^{(t)})\leq\hat{F}(\werm)+\epsilon$.
\end{theorem}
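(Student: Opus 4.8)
The plan is to chain together the three preceding results---\lemref{lem:hoef} (the per-machine Hessians concentrate around $H$), \lemref{lem:HtildeH} (so $\tilde H$ approximates $H$ and the per-iteration contraction factor is bounded away from $1$), and \thmref{thm:main} (which turns such a contraction into geometric convergence of $\norm{w^{(t)}-\werm}$)---and then convert the resulting distance bound into a function-value bound using smoothness.

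First I would fix the parameters $\eta=1$ and $\mu=\max\{0,8\beta^2/\lambda-\lambda\}$ with $\beta:=\sqrt{32L^2\log(dm/\delta)/n}$. By \lemref{lem:hoef}, with probability at least $1-\delta$ we have $\max_i\norm{H_i-H}_2\le\beta$, and since $\lambda\preccurlyeq\nabla^2 f\preccurlyeq L$ implies $\lambda\preccurlyeq H\preccurlyeq L$, the hypotheses of \lemref{lem:HtildeH} hold; I would condition on this event for the rest of the argument. Writing $\rho:=\norm{I-\eta\tilde H^{-1}H}_2=\norm{I-\tilde H^{-1}H}_2$, \lemref{lem:HtildeH} together with $\beta^2\le 32L^2\log(dm/\delta)/n$ yields, uniformly over its two cases,
\[
1-\rho\ \ge\ \min\left\{\tfrac12,\ \frac{\lambda^2}{16\beta^2}\right\}\ \ge\ \min\left\{\tfrac12,\ \frac{n}{512\,(L/\lambda)^2\log(dm/\delta)}\right\}
\]
(in the first case $\rho\le 4\beta^2/\lambda^2\le\tfrac12$, and in the second $1-\rho\ge\lambda^2/16\beta^2$). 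In particular $\rho<1$, so \thmref{thm:main} gives $\norm{w^{(t)}-\werm}\le\rho^{t}\norm{w^{(0)}-\werm}$.

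It then remains to read off the iteration count. Since $\hat F$ is $L$-smooth (being an average of $L$-smooth $f(\cdot,z)$) and $\nabla\hat F(\werm)=0$, we have $\hat F(w^{(t)})-\hat F(\werm)\le\frac{L}{2}\norm{w^{(t)}-\werm}^2\le\frac{L}{2}\rho^{2t}\norm{w^{(0)}-\werm}^2$. Forcing the right-hand side below $\epsilon$ needs $t\ge\frac{1}{2\log(1/\rho)}\log\!\big(L\norm{w^{(0)}-\werm}^2/(2\epsilon)\big)$. Using $\log(1/\rho)\ge 1-\rho$ with the lower bound on $1-\rho$ above gives $1/\log(1/\rho)\le\max\{2,\ 512(L/\lambda)^2\log(dm/\delta)/n\}$, which produces the claimed iteration count (up to the constant absorbed in $\Ocal(\cdot)$; the stated form is the bound in the regime $(L/\lambda)^2\log(dm/\delta)/n\gtrsim 1$, while for larger $n$ a constant number of iterations already suffices).

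I expect the only real subtlety to be the unified treatment of the two cases of \lemref{lem:HtildeH}: one must verify that both the small-$\beta$ case (constant contraction) and the large-$\beta$ case (contraction $1-\Theta(\lambda^2/\beta^2)$) feed into the same $1/\log(1/\rho)$ bound, and that the dependence on $n$ enters only through $\beta^2\propto 1/n$. Everything else---the concentration, the single-step contraction, and the smoothness conversion---is a direct invocation of the stated results, so the argument is essentially bookkeeping once the case analysis is organized as above.
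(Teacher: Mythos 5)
Your proposal is correct and follows essentially the same route as the paper's proof: plug \lemref{lem:hoef} into \lemref{lem:HtildeH} to bound $\norm{I-\tilde H^{-1}H}_2$ (your constants $128$ and $512$ match the paper's), apply \thmref{thm:main}, and convert to function values via $L$-smoothness. Your unified handling of the two cases through $\log(1/\rho)\geq 1-\rho$ is only a cosmetic repackaging of the paper's explicit case analysis, and your closing remark about the large-$n$ regime matches how the paper resolves the first case.
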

The proof appears in Appendix \ref{app:thmstochproof}. From the theorem, we see that if the condition number $L/\lambda$ is fixed, then as
$n\rightarrow\infty$ the number of required iterations decreases.  In
fact, for any target sub-optimality $\epsilon$, as long as the sample
size is at least logarithmically large, namely
$n=\Omega\left((L/\lambda)^2\log(dm)\log(\frac{1}{\epsilon})\right)$,
we can obtain the desired accuracy after a constant or even a single DANE iteration! This is a mild requirement on the sample size, since $N$ generally increases at least linearly with $1/\epsilon$.

We next turn to discuss the more challenging case where the condition
number decays with the sample size.

\subsection{Analysis for Regularized Objectives}\label{sec:reg}

Consider a stochastic convex optimization scenario where the
instantaneous objectives $f(w,z)$ are not strongly convex.  For
example, this is the case in linear prediction (including linear and
kernel classification and regression, support vector machines, etc.),
and more generally for generalized linear objectives of the form
$f(w,z)=\ell_z(\inner{w,\Psi(z)})$.  For such generalized linear
objectives, the Hessian $\nabla_w^2 f(w,z)$ is rank-$1$, and so
certainly not strongly convex, even if $\ell_z(\cdot)$ is strongly convex.

Confronted with such non-strongly-convex objectives, a standard
approach is to perform empirical
minimization on a {\em regularized} objective \citep{SSSSS09}.  That
is, to define the regularized instantaneous objective
\begin{equation}
  \label{eq:flambda}
  f_\lambda(w,z)=f(w,z)+\frac{\lambda}{2}\norm{w}^2
\end{equation}
and minimize the corresponding empirical objective $\hat{F}_\lambda$.
The instantaneous objective $f_\lambda(w,z)$ of the modified
stochastic optimization problem is now $\lambda$-strongly convex.
If $f(w,z)$ are $G$-Lipschitz in $w$, then we have \citep{SSSSS09}:
\begin{align*}
  F(\werm_\lambda)
  &\leq F_\lambda(\werm_\lambda) \leq F_\lambda(w_\lambda^*) +
  \Ocal\left(\frac{G^2}{\lambda N}\right) \\
  & = \inf_w \left( F(w) + \frac{\lambda}{2}\norm{w}^2 \right) +
  \Ocal\left(\frac{G^2}{\lambda N}\right) \\
  &\leq \inf_{\norm{w}\leq B} F(w) + \Ocal\left(\lambda B^2 +
  \frac{G^2}{\lambda N}\right),
\end{align*}
where $\werm_\lambda=\arg\min \hat{F}_\lambda(w)$ and
$w_\lambda^* = \arg\min F_\lambda(w)$.  The optimal choice of
$\lambda$ in the above is $\lambda = \sqrt{\frac{G^2}{B^2
    N}}$, where $B$ is a bound on the predictors we would like to
compete with, and with this $\lambda$ we get the optimal rate:
\begin{equation}
  F(\werm_\lambda) \leq \inf_{\norm{w}\leq B} F(w) + \Ocal\left(\sqrt{\frac{B^2 G^2}{N}}\right).
\end{equation}

It is thus instructive to consider the behavior of DANE when $\lambda
= \Theta\left(\sqrt{\frac{G^2}{B^2
    N}}\right) = \Theta\left(\sqrt{\frac{G^2}{B^2
    nm}}\right)$.  Plugging this choice of $\lambda$ into Theorem \ref{thm:stoch}, we get that the number of DANE iterations behaves as:
\begin{equation}
  \Ocal\left( \frac{L^2 B^2}{G^2} \cdot m \cdot \log(dm) \log(1/\epsilon) \right).
\end{equation}
That is, unlike distributed gradient descent, or any other relevant method we are aware of, the number of required iterations / communication rounds does {\em not} increase with the sample size, and only scales linearly with the
number of machines.

\section{Convergence Analysis for Non-Quadratic Objectives}\label{sec:gen}

As discussed above, it is notoriously difficult to obtain generic global
analysis of Newton-type methods.  Our main theoretical result in this paper
is the analysis for quadratic objectives, which we believe is also
instructive for non-quadratics.  Nevertheless, we complement this with a
convergence analysis for generic objectives.

We therefore return to considering generic convex
objectives $\phi_i(w)$.  We also do not make any stochastic
assumptions.  We only assume that each $\phi_i(w)$ is $L_i$-smooth and
$\lambda_i$ strongly convex, and that the combined objective $\phi(w)$
is $L$-smooth and $\lambda$-strongly convex.

\begin{theorem}
  Assume that for all $i,w,z$, $\lambda_i \preccurlyeq \nabla^2
  \phi_i(w) \preccurlyeq L_i$ and $\lambda \preccurlyeq \nabla^2
  \phi(w) \preccurlyeq L$. Let $$\rho = \frac{1}{m}
  \sum_{i=1}^m \left[\frac{1}{\mu+L_i} - \frac{\eta
      L}{2(\mu+\lambda_i)^2}\right] \eta \lambda.$$  If $\rho >0$, then the DANE iterates
      satisfy
 $
 \phi(w^{(t)}) -\phi(\werm) \leq (1-\rho)^t [\phi(w^{(0)}) - \phi(\werm)] .
 $
\label{thm:general-loss}
\end{theorem}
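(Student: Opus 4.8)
The plan is to prove a one-step contraction and then iterate. Fix an iteration and write $w=w^{(t-1)}$, $h_i(w)=\phi_i(w)+\frac{\mu}{2}\norm{w}^2$, and $\delta_i=w_i^{(t)}-w$. Setting the gradient of the local objective in \eqref{eq:wiupdate} to zero and rearranging gives the mirror-descent identity
\[
\nabla h_i(w_i^{(t)})-\nabla h_i(w)=-\eta\,\nabla\phi(w).
\]
Here $h_i$ is convex and $(\mu+L_i)$-smooth as well as $(\mu+\lambda_i)$-strongly convex, so the two curvature properties will pin down $\delta_i$ from opposite sides.

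Next I would extract two inequalities by pairing this identity with $\delta_i$. Co-coercivity of the $(\mu+L_i)$-smooth convex $h_i$ gives $\inner{\nabla h_i(w_i^{(t)})-\nabla h_i(w),\delta_i}\geq \frac{1}{\mu+L_i}\norm{\nabla h_i(w_i^{(t)})-\nabla h_i(w)}^2$, which after substituting the identity becomes the descent bound
\[
-\inner{\nabla\phi(w),\delta_i}\geq \frac{\eta}{\mu+L_i}\norm{\nabla\phi(w)}^2.
\]
Using instead $(\mu+\lambda_i)$-strong convexity of $h_i$ together with Cauchy--Schwarz on the same pairing yields the step-size bound $\norm{\delta_i}\leq \frac{\eta}{\mu+\lambda_i}\norm{\nabla\phi(w)}$, which will control the error term.

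Now apply the descent lemma for the $L$-smooth $\phi$ at $w^{(t)}=w+\bar\delta$ with $\bar\delta=\frac1m\sum_i\delta_i$, namely $\phi(w^{(t)})\leq\phi(w)+\inner{\nabla\phi(w),\bar\delta}+\frac L2\norm{\bar\delta}^2$. The linear term equals the average of the $-\inner{\nabla\phi(w),\delta_i}$ lower bounds, while the quadratic term is bounded by Jensen, $\norm{\bar\delta}^2\leq\frac1m\sum_i\norm{\delta_i}^2$, followed by the step-size bound; everything then factors through $\norm{\nabla\phi(w)}^2$ to give
\[
\phi(w^{(t)})-\phi(w)\leq -\frac{\eta}{m}\sum_{i=1}^m\left[\frac{1}{\mu+L_i}-\frac{\eta L}{2(\mu+\lambda_i)^2}\right]\norm{\nabla\phi(w)}^2.
\]
The gradient-domination inequality implied by $\lambda$-strong convexity of $\phi$, $\norm{\nabla\phi(w)}^2\geq 2\lambda(\phi(w)-\phi(\werm))$, converts the right-hand side into $-2\rho(\phi(w)-\phi(\werm))$, so with $e_t=\phi(w^{(t)})-\phi(\werm)$ we get $e_t\leq(1-2\rho)e_{t-1}\leq(1-\rho)e_{t-1}$, and iterating yields the claim.

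The main subtlety is the \emph{asymmetric} use of the two curvature bounds on $h_i$: smoothness (through co-coercivity) is what produces the $\frac{1}{\mu+L_i}$ factor in the descent term, whereas strong convexity is what produces the $\frac{1}{(\mu+\lambda_i)^2}$ factor in the error term, and swapping the roles gives a strictly weaker constant that will not reproduce $\rho$. I would also double-check that the direction of each step is consistent with upper-bounding $\phi(w^{(t)})$ (Jensen is applied to the positive quadratic term, and co-coercivity supplies a lower bound on the decrease). The extra factor of two relative to the stated $\rho$ only strengthens the result, since $e_t\leq(1-\rho)e_{t-1}$ already follows from $e_{t-1}\geq0$ and $\rho\geq0$.
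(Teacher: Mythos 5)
Your proof is correct and follows essentially the same route as the paper's: the same first-order identity $\nabla h_i(w_i^{(t)})-\nabla h_i(w^{(t-1)})=-\eta\nabla\phi(w^{(t-1)})$, co-coercivity of the $(\mu+L_i)$-smooth $h_i$ for the descent term, strong convexity of $h_i$ for the step-size bound, $L$-smoothness of $\phi$, averaging via Jensen, and gradient domination. The only cosmetic differences are that you apply Jensen to $\norm{\bar\delta}^2$ inside the descent lemma rather than to $\phi\bigl(\tfrac1m\sum_i w_i^{(t)}\bigr)$ as the paper does, and you use the standard Polyak--\L{}ojasiewicz constant $2\lambda$ where the paper uses $\lambda$, yielding a slightly stronger $(1-2\rho)$ contraction that you correctly relax to $(1-\rho)$.
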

The proof appears in Appendix \ref{app:general-loss}. The theorem establishes that with any $\mu>0$ and
small enough step-size $\eta$, DANE converges to $\werm$.  If each
$\phi_i(w)$ is strongly convex, we can also take $\mu=0$ and
sufficiently small $\eta$ and ensure convergence to $\werm$. However, the optimal setting of $\eta$ and $\mu$ above is to take
$\mu\rightarrow\infty$ and set $\eta = \mu/L$, in which case $\rho
\rightarrow \lambda/L$, and we recover distributed gradient descent,
with the familiar gradient descent guarantee.

\begin{figure}[t]
\centering
\includegraphics[scale=0.65,trim=1cm 0.5cm 1cm 0.1cm,clip=true]{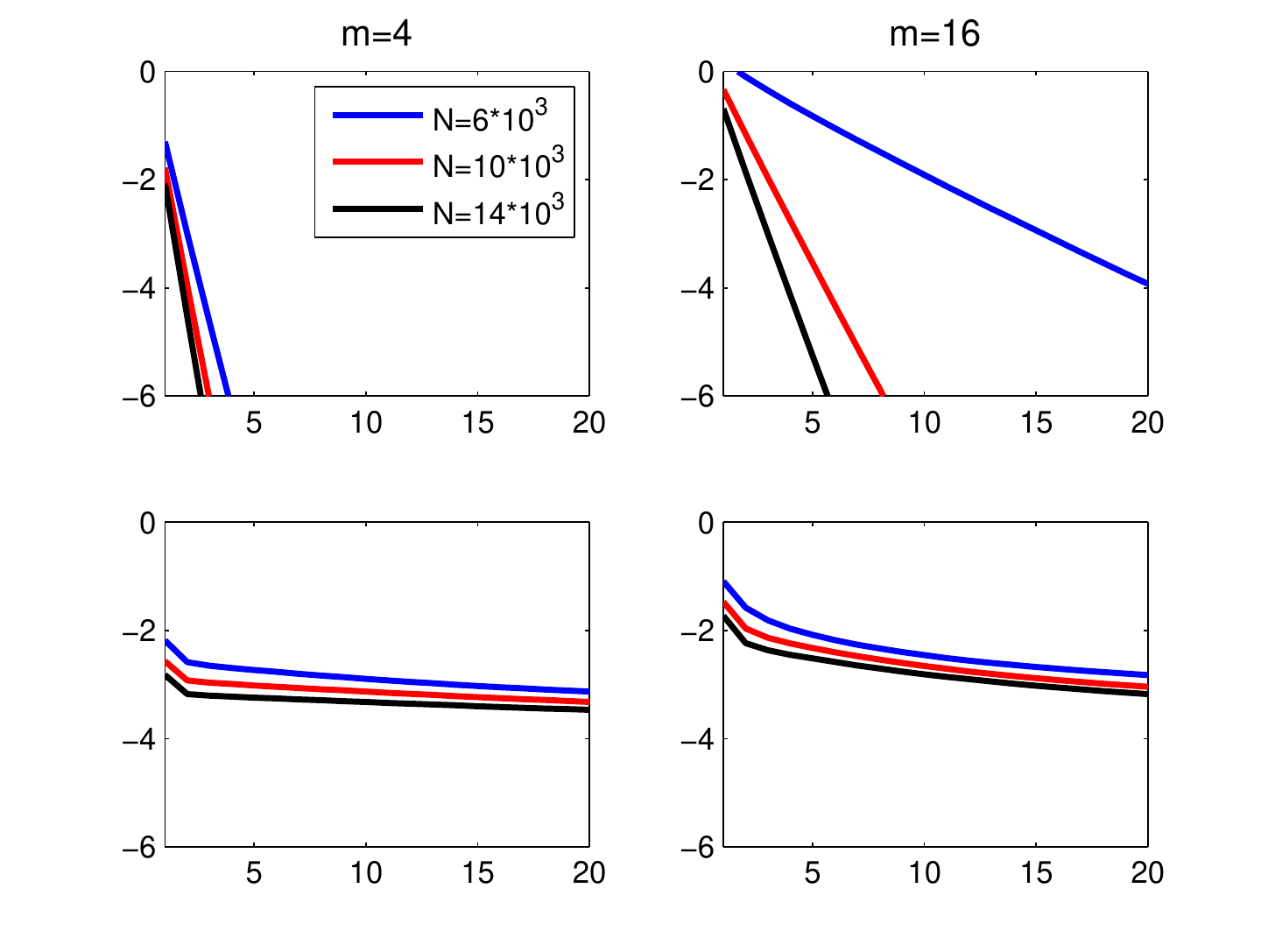}
\caption{Synthetic dataset: Convergence rate for different number of machines $m$ and sample sizes $N$. The top row presents results for DANE, and the bottom row for ADMM. The $x$-axis is the iteration number,
and the $y$-axis is the logarithm (in base 10) of the suboptimality.}
\label{fig:synthetic}
\end{figure}

We again emphasize that the analysis above is weak and does not take into
account the relationship between the local objectives $\phi_i(w)$. We believe
that the quadratic analysis of Section \ref{sec:quad} better captures the
true behavior of DANE. Moreover, we can partially bridge this gap by the
following result, which shows that a variant of DANE enjoys a linear
convergence rate which improves as the local objectives $\phi_i$ become more
similar to $\phi$ (the proof is in Appendix \ref{app:general-loss-refined}):
\begin{theorem}
  Assume that in the DANE procedure, we replace step $(*)$ by $w^{(t)}= w_1^{(t)}$, and define $h(\cdot)=h_1(\cdot)$.
  If there exists $\gamma>0$ such that $\forall w,w'$, we have $\gamma D_h(w;w') \leq D_\phi(w;w') \leq \eta^{-1} D_h(w;w')$, then
\[
D_h(\werm;w^{(t)}) \leq (1-\eta \gamma)^t D_h(\werm;w^{(0)}) .
\]
\label{thm:general-loss-refined}
\end{theorem}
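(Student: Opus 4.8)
The plan is to recognize the modified update as a single mirror-descent step on the global objective $\phi$ with Bregman potential $h$, and then run the classical relative-smoothness / relative-strong-convexity analysis of mirror descent. Recall from \eqref{eq:wiMDupdate} that $w^{(t)}=w_1^{(t)}$ minimizes $\inner{\nabla\phi(w^{(t-1)}),w} + \tfrac1\eta D_h(w;w^{(t-1)})$, the remaining terms being independent of $w$. First I would record its first-order optimality condition, which (using $\nabla_w D_h(w;w^{(t-1)}) = \nabla h(w)-\nabla h(w^{(t-1)})$) reads
\[
\nabla h(w^{(t)}) = \nabla h(w^{(t-1)}) - \eta\,\nabla\phi(w^{(t-1)}).
\]
The two hypotheses $\gamma D_h \le D_\phi$ and $D_\phi \le \eta^{-1} D_h$ are exactly relative strong convexity and relative smoothness of $\phi$ relative to $h$, with the smoothness constant matched to the inverse step size $1/\eta$; this matching is what drives the contraction.

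The workhorse is the three-point (generalized law of cosines) identity for Bregman divergences: for any $a,b,c$,
\[
\inner{\nabla h(b)-\nabla h(c),\,a-b} = D_h(a;c) - D_h(a;b) - D_h(b;c).
\]
Applying it with $a=\werm$, $b=w^{(t)}$, $c=w^{(t-1)}$ and substituting the optimality condition $\nabla h(w^{(t)})-\nabla h(w^{(t-1)})=-\eta\nabla\phi(w^{(t-1)})$, I obtain
\[
\eta\inner{\nabla\phi(w^{(t-1)}),\,w^{(t)}-\werm} = D_h(\werm;w^{(t-1)}) - D_h(\werm;w^{(t)}) - D_h(w^{(t)};w^{(t-1)}).
\]
Next I expand the left inner product by splitting $w^{(t)}-\werm=(w^{(t)}-w^{(t-1)})+(w^{(t-1)}-\werm)$ and rewriting each piece via the definition of $D_\phi$; this produces $\eta[\phi(w^{(t)})-\phi(\werm)]$, a term $-\eta D_\phi(w^{(t)};w^{(t-1)})$, and a term $+\eta D_\phi(\werm;w^{(t-1)})$ on the left.

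The crucial cancellation is then supplied by relative smoothness: since $\eta D_\phi(w^{(t)};w^{(t-1)}) \le D_h(w^{(t)};w^{(t-1)})$, the negative $-D_h(w^{(t)};w^{(t-1)})$ on the right is absorbed exactly, leaving
\[
\eta\bigl[\phi(w^{(t)})-\phi(\werm)\bigr] + \eta D_\phi(\werm;w^{(t-1)}) \le D_h(\werm;w^{(t-1)}) - D_h(\werm;w^{(t)}).
\]
Because $\werm$ minimizes $\phi$, the first bracket is nonnegative and may be dropped, and relative strong convexity $D_\phi(\werm;w^{(t-1)}) \ge \gamma D_h(\werm;w^{(t-1)})$ then turns the surviving term into $\eta\gamma D_h(\werm;w^{(t-1)})$. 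Rearranging gives the one-step contraction $D_h(\werm;w^{(t)}) \le (1-\eta\gamma)\,D_h(\werm;w^{(t-1)})$, and unrolling over $t$ yields the stated bound.

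I expect the only delicate point to be the sign bookkeeping when expanding the inner product, together with applying the smoothness inequality in the correct direction so that the $D_h(w^{(t)};w^{(t-1)})$ term cancels identically rather than merely being bounded; once the two relative-conditioning inequalities are invoked at the right places, the remainder is routine. I would also note for completeness that the argument implicitly uses differentiability of $h$ and $\phi$ and well-definedness of the local minimizer, so that the optimality condition and the Bregman divergences are all meaningful.
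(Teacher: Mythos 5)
Your proof is correct and follows essentially the same route as the paper's: both rest on the first-order optimality condition $\nabla h(w^{(t)})-\nabla h(w^{(t-1)})=-\eta\nabla\phi(w^{(t-1)})$, a Bregman three-point identity, cancellation of the $D_h(w^{(t)};w^{(t-1)})$ term via the relative-smoothness hypothesis, dropping $\phi(w^{(t)})-\phi(\werm)\ge 0$, and finally the relative-strong-convexity hypothesis to obtain the one-step contraction. The paper merely organizes the same computation differently (first deriving an intermediate inequality from the Bregman expansion of $\phi(\werm)$ about $w^{(t-1)}$, then combining it with the three-point identity), so the two arguments are the same in substance.
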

\vskip -0.7cm If $\mu$ is small and $\phi_i \approx \phi$, then we expect
$\gamma \approx 1$ and $\eta \approx 1$. In this case, $\eta \gamma \approx
1$, leading to fast convergence.

\section{Experiments}

\begin{figure*}[t]
\centering
\begin{tabular}{|c||c|c|c|c|c|c||c|c|c|c|c|c||c|c|c|c|c|c|}
\hline
&\multicolumn{6}{c||}{COV1} & \multicolumn{6}{c||}{ASTRO} & \multicolumn{6}{c|}{MNIST-47}\\\hline
$m$&2&4&8&16&32&64&2&4&8&16&32&64&2&4&8&16&32&64\\\hline\hline
$\mu=0$&2&2&2&2&2&3&6&6&6&6&12&*&5&5&5&5&6&*\\
$\mu=3\lambda$&9&9&9&9&9&9&14&14&14&14&14&14&10&10&10&10&10&10\\
ADMM&3&3&5&9&16&31&24&20&16&16&14&20&23&23&27&21&31&28\\\hline
\end{tabular}
\caption{Number of iterations required to reach $<10^{-6}$ accuracy on 3 datasets, for varying number of machines $m$.
  Results are for DANE using $\eta=1$ and $\mu=0,\lambda,3\lambda$, and for ADMM. * Indicates non-convergence after $100$ iterations.}
\label{fig:train}
\end{figure*}

\begin{figure*}
\centering
\includegraphics[scale=0.7,trim=0cm 0.5cm 0cm 0cm,clip=true]{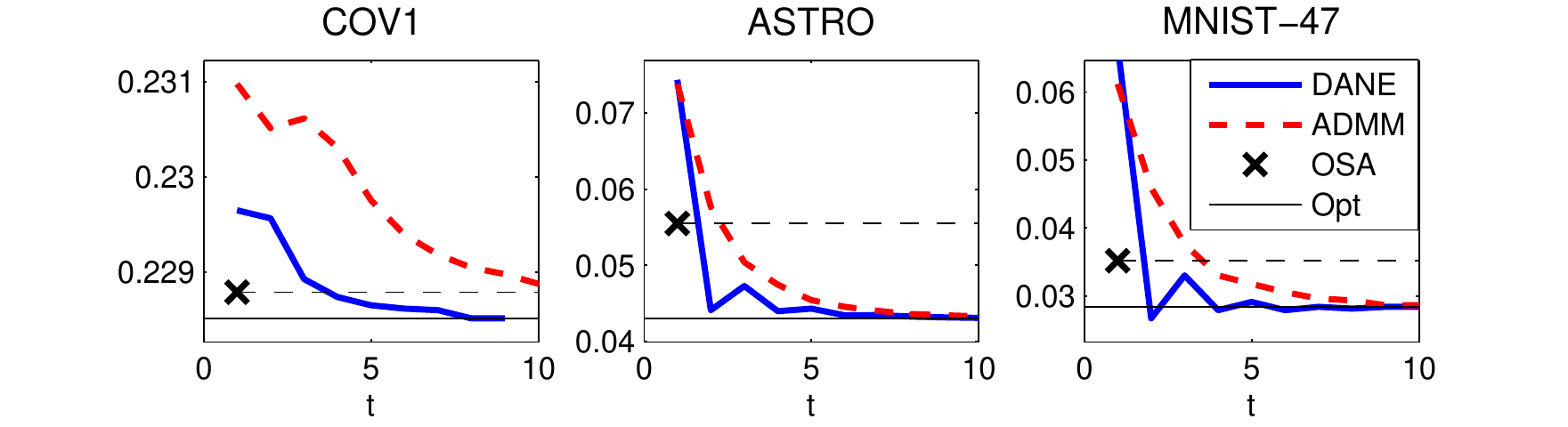}
\caption{Average regularized smooth-hinge loss on the test set as a function of the iteration number. OSA represents bias-corrected one-shot parameter averaging, which requires a single iteration. `Opt' is the average loss of the exact regularized loss minimizer.}
\label{fig:test}
\end{figure*}

In this section, we present preliminary experimental results on our proposed
method. In terms of tuning $\eta,\mu$, we discovered that simply picking
$\eta=1,\mu=0$ (which makes DANE closest to a Newton-type iteration, as
discussed in Section \ref{sec:alg}) often results in the fastest convergence.
However, in unfavorable situations (such as when the data size per machine is
very small), this can also lead to non-convergence. In those cases,
convergence can be recovered by slightly increasing $\mu$ to a small positive
number. In the experiments, we considered $\mu=0,3\lambda$. These are
considerably smaller than what our theory indicates, and we leave the
question of the best parameter choice to future research.

We begin by considering a simple quadratic problem using a synthetic dataset,
where all parameters can be explicitly controlled. We generated $N$ i.i.d.
training examples $(x,y)$ according to the model $y = \inner{x,w^*}+\xi~,~
x\sim \Ncal(0,\Sigma),\xi\sim \Ncal(0,1)$, where $x\in \reals^{500}$, the
covariance matrix $\Sigma$ is diagonal with $\Sigma_{i,i}=i^{-1.2}$, and
$w^*$ is the all-ones vector. Given a set of examples $\{x,y\}$ which is
assumed to be randomly split to different machines, we then solved a standard
ridge regression problem of the form
$\min_{w}\frac{1}{N}\sum_{i=1}^{N}(\inner{x,w}-y)^2+0.005 w^2$, using DANE
(with $\eta=1,\mu=0$). \figref{fig:synthetic} shows the convergence behavior
of the algorithm for different number of machines $m$ as the total number of
examples $N$ (and hence also the data size per machine) increases. For
comparison, we also implemented distributed ADMM \cite{BoPaChPeEc11}, which
is a standard method for distributed optimization but does not take advantage
of the statistical similarity between problems at different machines. The
results for DANE clearly indicate a linear convergence rate, and moreover,
that the rate of convergence \emph{improves} with the data size, as predicted
by our analysis. In contrast, while more data improves the ADMM accuracy
after a fixed number of iterations, the convergence rate is slower and does
not improve with the data size\footnote{To be fair, ADMM performs a single
distributed averaging computation per iteration, while DANE performs two.
However, counting iterations is a more realistic measure of performance,
since both methods also perform a full-scale local optimization at each
iteration.}.

We now turn to present results for solving a smooth non-quadratic problem, this time using non-synthetic datasets. Specifically, we solved a regularized loss minimization problem of the form
$\min_{w}\frac{1}{N}\sum_{i=1}^{N}\ell(y_i\inner{x_i,w})
+\frac{\lambda}{2}\norm{w}^2,
$
where $\ell$ is the smooth hinge loss (as in \cite{SSZ13}) and
the training examples $\{(x_i,y_i)\}$ are randomly split among different machines. We experimented on $3$ datasets: \textsc{COV1} and \textsc{ASTRO-PH} (as used in e.g. \cite{SSZ13,RSR12}),
as well as a subset of the \textsc{MNIST} digit recognition dataset which focuses
on discriminating the 4 from the 7 digits\footnote{We used $\lambda=10^{-5}$ for \textsc{COV1}, $\lambda = 0.0005$ for \textsc{ASTRO} and $\lambda=0.001$ for \textsc{MNIST-47}. For \textsc{MNIST-47}, we randomly chose 10,000 examples as the training set, and the rest of the examples as a test set.}.
In figure \ref{fig:train}, we present the number of iterations required for DANE to reach accuracy $<10^{-6}$ for $\eta=1$ and
$\mu=0,3\lambda$, and for different number of machines. We also report results for ADMM on the same datasets. As in the synthetic case, DANE explicitly takes advantage of the similarity between problems on different machines, and we indeed observe that it tends to converge in less iterations than ADMM. Finally, note that for $\mu=0$ and many machines (i.e.
few data points per machine), DANE may not converge, and increasing $\mu$ fixes this at the cost of slowing down the average convergence rate.

Finally, we examine the convergence on these datasets in terms of the average
loss on the test set. In figure \ref{fig:test}, we present the results for
$m=64$ machines on the three datasets, using DANE (with $\mu=3\lambda$) and
ADMM. We also present for comparison the objective value obtained using
one-shot parameter averaging (OSA), using bias correction as proposed in
\cite{zhang2013communication}. The figure highlights the practical importance
of multi-round communication algorithms: while DANE and ADMM converge to the
value achieved by the regularized loss minimizer, the single-round OSA
algorithm may return a significantly suboptimal result.

\paragraph{Acknowledgements:}{Ohad Shamir and Nathan Srebro are supported by the Intel ICRI-CI
Institute. Ohad Shamir is further supported by an Israel Science Foundation
grant 425/13 and an FP7 Marie Curie CIG grant.}

%

\bibliographystyle{plainnat}
\bibliography{mybib}

\appendix

\section{Lower Bounds for One-shot Parameter Averaging}\label{app:lowbound}

\subsection{Proof of \thmref{thm:lowbound}}

Before providing the proof details, let us first describe the high-level
intuition of our construction. Roughly speaking, one-shot averaging works
well when the bias of the predictor returned by each machine (as a random
vector in Euclidean space, based on the sampled training data) is much
smaller than the variance. Since each such predictor is based on independent
data, averaging $m$ such predictors reduces the variance by a factor of $m$,
leading to good guarantees. However, averaging has no effect on the bias, so
this method is ineffectual when the bias dominates the variance. The
construction below shows that when the strong convexity parameter is small,
this can indeed happen.

More specifically, when the strong convexity parameter is smaller than
$\Ocal(1/\sqrt{n})$, the magnitude of the deviations of the (random)
predictor returned by each machine does not decay with the sample size $n$.
Moreover, its distribution is highly dependent on the data distribution and
the shape of $f$, and is biased in general. Below we use one such
construction, which we found to be convenient for precise analytic
calculations, but the intuition applies much more broadly.

Specifically, let $\Wcal = [-2/\lambda,\log(1/\lambda)]$, and define the loss
function $f(w;z)$ as
\[
f(w;z) = \lambda\left(\frac{1}{2}w^2+\exp(w)\right)-zw.
\]
Furthermore, suppose that $z\sim \Ncal(0,1)$, i.e. the examples have a
standard Gaussian distribution. Note that this function is $\lambda$-strongly
convex, and can be shown to satisfy $\E_z[f'(w;z)^2]\leq 9$ for any
$w\in\Wcal$.

Let $\hat{w}_1$ be the parameter vector returned by the machine $1$ (this is without loss of generality, since all the machines receive examples drawn from the same distribution).
The key to the proof is to show that $\hat{w}_1$ is strongly biased, namely that $\E[\hat{w}_1]$ is bounded away from the true optimum $w^*$.
To compute $\E[\hat{w}_1]$, note that $\hat{w}_1$ minimizes the random function
\begin{equation}\label{eq:erm}
\frac{1}{n}\sum_{i=1}^{n} f(w;z_i) =  \lambda\left(\frac{1}{2}w^2+\exp(w)\right)-\frac{\tilde{z}}{\sqrt{n}}w,
\end{equation}
where $\tilde{z}=(z_1+\ldots+z_n)/\sqrt{n}$. Note that since $z_1,\ldots,z_n$ are i.i.d. Gaussians, $\tilde{z}$ also has the same Gaussian distribution $\Ncal(0,1)$.

Taking the derivative, equating to zero and slightly manipulating the result, we get that
\[
\lambda\sqrt{n}\left(w+\exp(w)\right)=\tilde{z}.
\]
The function on the left-hand-side is strictly monotonically increasing, and has a range $[-\infty,\infty]$. Thus, for any $\tilde{z}$, there exists a unique root $w(\tilde{z})$. Moreover, as long as $|\tilde{z}|\leq \sqrt{n}$, it's easy to verify that $w(\tilde{z})$ is within our domain $\Wcal = [-2/\lambda,\log(1/\lambda)]$, hence $\hat{w}_1=w(\tilde{z})$. Therefore, letting $p(\cdot)$ denote the standard gaussian distribution of $\tilde{z}$, we have
\begin{align}
\E[\hat{w}_1] &= \int_{x=-\sqrt{n}}^{\sqrt{n}}\hat{w}_1 p(x)dx+\int_{|x|>\sqrt{n}}\hat{w}_1 p(x)dx
\leq \int_{x=-\sqrt{n}}^{\sqrt{n}}w(x) p(x)dx+ \frac{2}{\lambda}\Pr(|\tilde{z}|\geq \sqrt{n})\notag\\
&\leq \int_{x=0}^{\sqrt{n}}\left(w(x)+w(-x)\right)p(x)dx + \frac{2}{\lambda}\sqrt{\frac{1}{n\exp(n)}},\label{eq:wxwmx}
\end{align}
where in the last step we used the symmetry of the distribution of $\tilde{z}$ and a standard Gaussian tail bound.

We now turn to analyze $w(x)+w(-x)$. First, we have by definition
\begin{equation}\label{eq:wxdef}
w(x)+\exp(w(x)) = \frac{x}{\lambda\sqrt{n}}
\end{equation}
for all $x$, and therefore
\begin{equation}\label{eq:wexp}
w(x)+w(-x) = \left(\frac{x}{\lambda\sqrt{n}}-\exp(w(x))\right)+\left(-\frac{x}{\lambda\sqrt{n}}-\exp(w(-x))\right)
\leq -\exp(w(x)).
\end{equation}
Therefore, we have $w(x)+w(-x)\leq 0$ for all $x$. More precisely, considering \eqref{eq:wxdef} and the fact that its left hand size is monotonic in $w(x)$, it's easy to verify that for any $x\geq 0$, we have $w(x)\geq \log\left(\frac{x}{\lambda \sqrt{n}}-\log\left(\frac{x}{\lambda \sqrt{n}}\right)\right)$, and $w(-x)\leq -\frac{x}{\lambda \sqrt{n}}$, so using \eqref{eq:wexp},
\[
w(x)+w(-x) \leq -\exp(w(x)) \leq -\frac{x}{\lambda \sqrt{n}}+\log\left(\frac{x}{\lambda \sqrt{n}}\right).
\]
Since $\log(a)<a/2$ for all $a\geq 0$, this expression is at most $-\frac{x}{2\lambda\sqrt{n}}$. Plugging this back into \eqref{eq:wxwmx}, and using the assumption $n\geq 9$, we get that
\[
\E[\hat{w}_1] \leq -\frac{1}{2\lambda\sqrt{n}}\int_{x=0}^{\sqrt{n}}x p(x)dx + \frac{2}{\lambda}\sqrt{\frac{1}{n\exp(n)}}
\leq -\frac{1}{2\lambda\sqrt{n}}\int_{x=0}^{\sqrt{9}}x p(x)dx + \frac{2}{\lambda}\sqrt{\frac{1}{n\exp(n)}}.
\]
Since $p(x)$ is the standard Gaussian distribution, it can be numerically checked that this is at most
\[
-\frac{0.19}{\lambda\sqrt{n}}+ \frac{2}{\lambda}\sqrt{\frac{1}{n\exp(n)}} =
\frac{1}{\lambda \sqrt{n}}\left(-0.19+2\sqrt{\exp(-n)}\right) \leq -\frac{1}{6\lambda\sqrt{n}}.
\]
So, we finally get $\E[\hat{w}_1] \leq -1/(6\lambda\sqrt{n})$.

Now, we show that this expected value of $\hat{w}_1$ is far away from $\wopt$. $\wopt$ is not hard to calculate: It satisfies
\[
\wopt+\exp(\wopt)=0,
\]
and it can be calculated numerically that $\wopt=-0.5671.. > -3/5$. Moreover, we assume $\lambda\leq 1/(9\sqrt{n})$, so $\lambda\sqrt{n}\leq 1/9$ and thus it can be verified that
\[
\wopt-\E[\hat{w}_1] > -\frac{3}{5}+\frac{1}{6\lambda\sqrt{n}} \geq \frac{1}{10~\lambda\sqrt{n}}.
\]
Note that this is always a positive quantity. As a result, using Jensen's inequality, we get
\[
\E[(\wopt-\wavg)^2] \geq (\wopt-\E[\wavg])^2 =  (\wopt-\E[\hat{w}_1])^2 \geq \frac{1}{100~\lambda^2 n}.
\]
Moreover, by $\lambda$-strong convexity of $f$, we have that
\[
\E[F(\wavg)-F(\wopt)] \geq \E\left[\frac{\lambda}{2}(\wavg-\wopt)^2\right] \geq \frac{1}{200~\lambda n}.
\]
Finally, it is known that by performing empirical risk minimization over all $N=nm$ instances, and using the fact that $\E[\norm{\nabla_w f(w,z)}^2]$ is bounded by a constant, we get
\[
\E[(\werm-\wopt)^2] \leq \Ocal\left(\frac{1}{\lambda^2 n m}\right)
\]
(see \cite{zhang2013communication})
and
\[
\E[F(\werm)-F(\wopt)] \leq \Ocal\left(\frac{1}{\lambda n m}\right)
\]
(see Equation \eqref{eq:Fwerm}). Combining the four inequalities above gives us the theorem statement.

\subsection{Bias Correction Also Fails}

In \cite{zhang2013communication}, which analyzes one-shot parameter averaging, the authors noticed that the analysis fails for small values of $\lambda$, and proposed a modification of the simple averaging scheme, designed to reduce bias issues. Specifically, given a parameter $r\in [0,1]$, each machine subsamples $rn$ examples without replacement from its dataset, and computes the optimum $\hat{w}_{2,k}$ with respect to this subsample. Then, it computes the optimum $\hat{w}_{k,1}$ over the entire dataset, and returns the weighted combination $\hat{w}_{k}=(\hat{w}_{k,1}-r \hat{w}_{k,2})/(1-r)$. Unfortunately, the analysis still results in lower-order terms with bad dependence on $\lambda$, and it's not difficult to extend our construction from \thmref{thm:lowbound} to show that this bias-corrected version of the algorithm still fails (at least, if $r$ is chosen in a fixed manner).

For simplicity, we will only sketch the derivation for a fixed choice of $\lambda$ given $n$, namely $\lambda = 1/10\sqrt{n}$, and for $r=1/2$. Also, we assume for simplicity that $\Wcal=\reals$ (to avoid tedious dealings with small Gaussian tails). With this choice, the returned solution becomes $\hat{w}_{k}=2\hat{w}_{k,1}-\hat{w}_{k,2}$. The distribution of $\hat{w}_{k,1}$, using the same derivation as in the proof of the theorem, is determined by
\[
\hat{w}_{k,1}+\exp(\hat{w}_{k,1}) = \frac{1}{\lambda \sqrt{n}}\tilde{z} = 10\tilde{z}
\]
where $\tilde{z}$ has a standard Gaussian distribution. As to $\hat{w}_{k,2}$, its distribution is similar to that of $\hat{w}_{k,1}$ with the same choice of $\lambda$ but only half as many points, hence
\[
\hat{w}_{k,2}+\exp(\hat{w}_{k,2}) = \frac{1}{\lambda \sqrt{n/2}}\tilde{z} = 10\sqrt{2}~\tilde{z}.
\]
By a numerical calculation, one can verify that $\E[\hat{w}_k] = 2\E[\hat{w}_{k,1}]-\E[\hat{w}_{k,2}] \approx 2*(-3.3)-(-4.8) = -1.8$.
In contrast, $w^*=-0.5671...$ as discussed in the proof. Thus, the bias is constant and does not scale down with the data size, getting a similar effect as in \thmref{thm:lowbound}

\section{Proof of Theorem \ref{thm:main}}\label{app:thmmainproof}

For any $w^{(t-1)}$, the optimal solution is always given by:
\begin{equation}\label{eq:wermquad}
\werm = \arg\min_w \phi(w) = w^{(t-1)} - H^{-1} \nabla \phi(w^{(t-1)}) .
\end{equation}
Following \eqref{eq:wiquad}, we have:
\begin{align*}
w^{(t)} &= w^{(t-1)} - \eta \left(\frac{1}{m} \sum
    (H_i+\mu I)^{-1}\right)
  \nabla \phi(w^{(t-1)}) \\
&=   w^{(t-1)} - \eta \tilde{H}^{-1} \nabla \phi(w^{(t-1)}).
\end{align*}
Therefore
\begin{equation}
  \label{eq:wtminuswerm}
  w^{(t)}-\werm = (H^{-1}-\eta \tilde{H}^{-1}) \nabla \phi(w^{(t-1)}) = (I - \eta \tilde{H}^{-1} H) (w^{(t-1)}-\werm).
\end{equation}
where for the last equality we rearranged \eqref{eq:wermquad} to
calculate $\nabla \phi(w^{(t-1)})=H(w^{(t-1)}-\werm)$.  Bounding
$\norm{Av}\leq\norm{A}_2\norm{v}$ and iterating \eqref{eq:wtminuswerm}
leads to the desired result.

\section{Proof of Lemma \ref{lem:HtildeH}}\label{app:HtildeH}
We will need two auxiliary lemmas:
\begin{lemma}\label{lem:hh1}For any positive definite matrix $H$:
\[
\norm{I-(H+\mu I)^{-1} H} = \frac{\mu}{\lambda+\mu},
\]
where $\lambda$ is the smallest eigenvalue of $H$.
\end{lemma}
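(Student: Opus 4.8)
The plan is to collapse the matrix $I-(H+\mu I)^{-1}H$ into a single resolvent and then read off its spectral norm directly from the spectrum of $H$. The key observation is the algebraic identity $I = (H+\mu I)^{-1}(H+\mu I)$, which lets me write $I-(H+\mu I)^{-1}H = (H+\mu I)^{-1}\left[(H+\mu I) - H\right] = \mu\,(H+\mu I)^{-1}$. This single manipulation does essentially all the work, reducing the claim to computing $\norm{\mu(H+\mu I)^{-1}}_2 = \mu\,\norm{(H+\mu I)^{-1}}_2$, where $\norm{\cdot}_2$ denotes the spectral norm in the statement.

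For the second step, I would use that $H$ is symmetric positive definite, hence admits an orthonormal eigendecomposition with eigenvalues $\sigma_1 \geq \cdots \geq \sigma_d \geq \lambda > 0$, where $\lambda$ is the smallest. The matrix $(H+\mu I)^{-1}$ is then simultaneously diagonalizable with $H$, with eigenvalues $1/(\sigma_j+\mu)$, all strictly positive. Its spectral norm is therefore its largest eigenvalue, namely $1/(\lambda+\mu)$, attained at the smallest eigenvalue of $H$. Multiplying by $\mu$ yields $\norm{I-(H+\mu I)^{-1}H}_2 = \mu/(\lambda+\mu)$, as claimed.

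There is no serious obstacle here; the only points requiring a modicum of care are (i) making sure the factoring of $I-(H+\mu I)^{-1}H$ is valid, which follows since $H$ and $(H+\mu I)^{-1}$ commute (both being functions of $H$), and (ii) justifying that the spectral norm of the symmetric positive definite matrix $(H+\mu I)^{-1}$ equals its largest eigenvalue rather than merely being bounded by it. Both are standard facts about symmetric matrices, so I expect the entire argument to be only a few lines.
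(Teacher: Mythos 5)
Your proof is correct and follows essentially the same route as the paper: the paper diagonalizes $H=USU^\top$ and reads off $\norm{I-(S+\mu I)^{-1}S}$ as one minus the smallest diagonal entry of $(S+\mu I)^{-1}S$, which is exactly your computation of the largest entry $\mu/(\lambda+\mu)$ after your (slightly cleaner) preliminary collapse $I-(H+\mu I)^{-1}H=\mu(H+\mu I)^{-1}$. No gaps.
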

\begin{proof}
Write $H=USU^\top$, then
\begin{align*}
&\norm{I-(H+\mu I)^{-1} H} = \norm{UIU^\top-(U(S+\mu I)U^\top)^{-1}USU^\top}
=  \norm{UIU^\top-U(S+\mu I)^{-1}U^\top USU^\top}\\
&= \norm{U\left(I - (S+\mu I)^{-1}S\right)U^\top}
=  \norm{I - (S+\mu I)^{-1}S}.
\end{align*}
This equals one minus the smallest element on the diagonal of the diagonal matrix $(S+\mu I)^{-1}S$, which is $\lambda/(\lambda+\mu)$.
\end{proof}

\begin{lemma}\label{lem:hh2}
Let $A$ be a positive definite matrix with minimal eigenvalue $\gamma$ which is larger than some $\mu>0$, and $\{\Delta_i\}_{i=1}^{m}$ matrices of the same size, such that $\max_i \norm{\Delta_i}\leq \beta$ and $\beta < \gamma$. Then
\[
\left\|\left(\frac{1}{m}\sum_{i=1}^{m}\left(A+\Delta_i\right)^{-1}-A^{-1}\right)(A-\mu I)\right\|\leq \frac{2\beta^2}{\gamma\left(\gamma-\beta\right)}
\]
\end{lemma}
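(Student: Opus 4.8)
The plan is to control the displayed quantity through an exact second-order expansion of the perturbed inverse $(A+\Delta_i)^{-1}$ about $A^{-1}$, and to exploit a cancellation upon averaging. I would first note that the listed hypotheses alone yield only an $\Ocal(\beta)$ bound (e.g.\ $m=1$, $A=I$, $\Delta_1=\beta I$ makes the left-hand side $\approx\beta$), so the lemma is evidently meant for the mean-zero case $\frac1m\sum_i\Delta_i=0$, which is exactly the setting it is used in: in \lemref{lem:HtildeH} one takes $A=H+\mu I$ (so $\gamma=\lambda+\mu$) and $\Delta_i=H_i-H$, for which $A-\mu I=H$ and $\frac1m\sum_i\Delta_i=0$. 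I would prove it under that assumption, and this cancellation is precisely what upgrades the estimate to the claimed $\Ocal(\beta^2)$.

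The central tool is the resolvent identity $(A+\Delta_i)^{-1}=A^{-1}-(A+\Delta_i)^{-1}\Delta_i A^{-1}$, which, applied twice, gives the exact ``right-hand'' second-order form
\[
(A+\Delta_i)^{-1} = A^{-1} - A^{-1}\Delta_i A^{-1} + (A+\Delta_i)^{-1}\Delta_i A^{-1}\Delta_i A^{-1}.
\]
Averaging over $i$ and subtracting $A^{-1}$, the linear term becomes $-A^{-1}\bigl(\frac1m\sum_i\Delta_i\bigr)A^{-1}=0$, leaving only the quadratic remainder:
\[
\frac1m\sum_{i=1}^m (A+\Delta_i)^{-1} - A^{-1} = \frac1m\sum_{i=1}^m (A+\Delta_i)^{-1}\Delta_i A^{-1}\Delta_i A^{-1}.
\]
I would then multiply on the right by $(A-\mu I)$; because the remainder ends in $A^{-1}$, the trailing factor becomes $A^{-1}(A-\mu I)=I-\mu A^{-1}$, so the potentially large $(A-\mu I)$ is never bounded on its own but absorbed into a well-behaved combination.

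The rest is routine spectral-norm bookkeeping, summand by summand. Since $A$ is symmetric positive definite with smallest eigenvalue $\gamma$, we have $\norm{A^{-1}}_2=1/\gamma$; by Weyl-type perturbation of singular values $\sigma_{\min}(A+\Delta_i)\ge\gamma-\beta>0$, hence $\norm{(A+\Delta_i)^{-1}}_2\le 1/(\gamma-\beta)$; and $I-\mu A^{-1}$ has eigenvalues $1-\mu/\sigma$ with every $\sigma\ge\gamma>\mu$, all lying in $(0,1)$, so $\norm{I-\mu A^{-1}}_2\le 1$. Together with $\norm{\Delta_i}_2\le\beta$ and submultiplicativity,
\[
\norm{(A+\Delta_i)^{-1}\Delta_i A^{-1}\Delta_i A^{-1}(A-\mu I)}_2 \le \frac{1}{\gamma-\beta}\cdot\beta\cdot\frac{1}{\gamma}\cdot\beta\cdot 1 = \frac{\beta^2}{\gamma(\gamma-\beta)},
\]
and the triangle inequality over the average preserves this, giving a bound even a factor of two sharper than required.

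The step I expect to be the crux is not any individual estimate but the bookkeeping that keeps the bound genuinely second order. Two things must align: the averaging must annihilate the first-order term (hence the mean-zero hypothesis), and the expansion must be taken from the correct side so that $(A-\mu I)$ — whose spectral norm is as large as the top eigenvalue of $A$ — always enters through $A^{-1}(A-\mu I)=I-\mu A^{-1}$. The ``left-hand'' expansion would instead leave $(A-\mu I)$ adjacent to $(A+\Delta_i)^{-1}$ and force a far worse estimate, so choosing the ordering correctly is the main thing to watch; everything else reduces to singular-value perturbation and submultiplicativity.
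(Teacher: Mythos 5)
Your proof is correct and is essentially the paper's argument: both expand $(A+\Delta_i)^{-1}$ perturbatively around $A^{-1}$, use averaging to annihilate the first-order term, and absorb the factor $(A-\mu I)$ through $A^{-1}(A-\mu I)=I-\mu A^{-1}$. The only technical difference is that you use the exact two-step resolvent identity with a single second-order remainder, whereas the paper expands the full Neumann series $\sum_{r\ge 0}(-1)^r(A^{-1}\Delta_i)^rA^{-1}$ and sums the tail from $r=2$; your version is marginally cleaner and, because you bound $\norm{I-\mu A^{-1}}\le 1$ rather than the paper's $1+\mu/\gamma\le 2$, you obtain $\beta^2/(\gamma(\gamma-\beta))$, a factor of two sharper than claimed. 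You are also right to flag that the hypothesis $\frac{1}{m}\sum_i\Delta_i=0$ is genuinely needed and is missing from the lemma statement: the paper's own proof silently invokes ``the assumption $\sum_{i=1}^m\Delta_i=0$,'' and your $m=1$, $\Delta_1=\beta I$ example shows the bound fails without it. The hypothesis does hold in the one place the lemma is used (in \lemref{lem:HtildeH}, with $\Delta_i=H_i-H$), so this is a flaw in the statement rather than in the downstream results.
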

\begin{proof}
For any $i$, we have
\[
(A+\Delta_i)^{-1} = \left(A(I+A^{-1}\Delta_i)\right)^{-1}= (I+A^{-1}\Delta_i)^{-1}A^{-1}.
\]
Note that $\norm{A^{-1}\Delta_i}\leq \norm{A^{-1}}\norm{\Delta_i} \leq \frac{1}{\gamma}\beta< 1$. Therefore, we can use the identity
\[
(I+C)^{-1}=\sum_{r=0}^{\infty}(-1)^r C^r
\]
which holds for any $C$ such that $\norm{C}< 1$. Using this with $C=A^{-1}\Delta_i$ and plugging back, we get
\[
(A+\Delta_i)^{-1} = \sum_{r=0}^{\infty}(-1)^r\left(A^{-1}\Delta_i\right)^r A^{-1} = A^{-1}-A^{-1}\Delta_i A^{-1}+\sum_{r=2}^{\infty}(-1)^r \left(A^{-1}\Delta_i\right)^r A^{-1}.
\]
Averaging over $i=1\ldots m$ and using the assumption $\sum_{i=1}^{m}\Delta_i=0$, we get
\[
\frac{1}{m}\sum_{i=1}^{m}(A+\Delta_i)^{-1}= A^{-1}+\sum_{r=2}^{\infty} \frac{(-1)^r }{m}\sum_{i=1}^{m}\left(A^{-1}\Delta_i\right)^r A^{-1}.
\]
Multiplying both sides by $(A-\mu I)$, we get
\[
\left(\frac{1}{m}\sum_{i=1}^{m}(A+\Delta_i)^{-1}\right)(A-\mu I)= A^{-1}\left(A-\mu I\right)+\sum_{r=2}^{\infty} \frac{(-1)^r }{m}\sum_{i=1}^{m}\left(A^{-1}\Delta_i\right)^r \left(I-\mu A^{-1}\right).
\]

By the triangle inequality and convexity of the norm, this implies
\begin{align*}
&\left\|\left(\frac{1}{m}\sum_{i=1}^{m}\left(A+\Delta_i\right)^{-1}-A^{-1}\right)(A-\mu I)\right\|=
\left\|\sum_{r=2}^{\infty} \frac{(-1)^r }{m}\sum_{i=1}^{m}\left(A^{-1}\Delta_i\right)^r (I-\mu A^{-1})\right\|\\
&\leq \sum_{r=2}^{\infty}\frac{1}{m}\sum_{i=1}^{m}\norm{\left(A^{-1}\Delta_i\right)^r (I-\mu A^{-1})}\\
&\leq \sum_{r=2}^{\infty}\frac{1}{m}\sum_{i=1}^{m}\norm{A^{-1}}^{r}\norm{\Delta_i}^r
\norm{I-\mu A^{-1}}\\
&\leq \sum_{r=2}^{\infty}\frac{\beta^r}{\gamma^{r}}\left(1+\frac{\mu}{\gamma}\right)
\leq \frac{2\beta^2}{\gamma^2}\sum_{r=0}^{\infty}\left(\frac{\beta}{\gamma}\right)^r = \frac{2\beta^2}{\gamma^2}\frac{1}{1-\frac{\beta}{\gamma}}
= \frac{2\beta^2}{\gamma(\gamma-\beta)}
\end{align*}
from which the result follows.
\end{proof}

We are now ready to prove Lemma \ref{lem:HtildeH}.
Using \lemref{lem:hh1}, we can upper bound $\norm{I-\tilde{H}^{-1}H}$ as
\begin{align*}
\norm{I-\frac{1}{m}\sum_{i=1}^{m}\left(H_i+\mu I\right)^{-1}H}
&\leq \norm{I-(H+\mu I)^{-1}H}+\left\|\frac{1}{m}\sum_{i=1}^{m}\left(H_i+\mu I\right)^{-1}H-(H+\mu I)^{-1}H\right\|\\
&\leq \frac{\mu}{\lambda+\mu}+\left\|\left(\frac{1}{m}\sum_{i=1}^{m}\left(H_i+\mu I\right)^{-1}-(H+\mu I)^{-1}\right)H\right\|\\
\end{align*}
Now, we use \lemref{lem:hh2} with $A = H+\mu I$ and $\Delta_i = H_i-H$ (noting that $\norm{\Delta_i}\leq \beta$), and get the bound
\[
\frac{\mu}{\lambda+\mu}+\frac{2\beta^2}{\left(\lambda+\mu\right)\left(\lambda+\mu
-\beta\right)}.
\]
assuming $\beta < \lambda+\mu$.

Now, let us assume the even stronger condition that $\beta<\frac{1}{2}(\lambda+\mu)$ (which we shall justify at the end of the proof), then we can upper bound the right hand side in the equation above by
\begin{equation}\label{eq:bbb}
\frac{\mu}{\lambda+\mu}+\frac{4\beta^2}{\left(\lambda+\mu\right)^2}.
\end{equation}
Differentiating with respect to $\mu$, we get an optimal point at
\[
\mu^{opt}=\frac{8\beta^2}{\lambda}-\lambda.
\]
If this is non-positive, it means that $\lambda^2>8\beta^2$, and moreover, that $\mu=0$, so \eqref{eq:bbb} equals $4\beta^2/\lambda^2$.
Otherwise, we pick $\mu = \frac{8\beta^2}{\lambda}-\lambda$, and \eqref{eq:bbb} becomes
\[
1-\frac{\lambda}{\frac{8\beta^2}{\lambda}}+\frac{4\beta^2}
  {\left(\frac{8\beta^2}{\lambda}\right)^{2}}
  = 1-\frac{\lambda^2}{8\beta^2}+\frac{\lambda^2}{16\beta^2}
  = 1-\frac{\lambda^2}{16\beta^2}.
\]
Combining the two cases, we get the result stated in the Lemma. Finally, it remains to justify why $\beta< \frac{1}{2}(\lambda+\mu)$. By the way we picked $\mu$, it's enough to prove that
\[
2\beta < \max\left\{\lambda,\frac{8\beta^2}{\lambda}\right\},
\]
or equivalently,
\[
2 < \max\left\{\frac{\lambda}{\beta},8\frac{\beta}{\lambda}\right\}.
\]
This is true since $\max\{x,8/x\}> 2$ for all positive $x$.

\section{Proof of \lemref{lem:hoef}}\label{app:lemhoefproof}
$H$ is the average of the Hessians of $mn$ i.i.d. quadratic functions,
all with eigenvalues at most $L$, and each $H_i$ is the average of the Hessians of $n$ i.i.d. quadratic
functions, all with eigenvalues at most $L$.  By a matrix Hoeffding's
inequality \citep{tropp12}, we have that for each $i$, with
probability $1-\delta$ over the samples received by machine $i$,
\[
\norm{H_i-\E[H_i]} \leq \sqrt{\frac{8 L^2 \log(d/\delta)}{n}}.
\]
By a union bound, we get that with probability $1-\delta$,
\[
\max_i\norm{H_i-\E[H_i]} \leq \sqrt{\frac{8 L^2 \log(dm/\delta)}{n}}.
\]
Moreover, we have $\E[H_i]=\E[H]$ and $H=\frac{1}{m}\sum_i H_i$, so if this event occurs, we also have
\[
\norm{H-\E[H]} \leq \sqrt{\frac{8 L^2 \log(d/\delta)}{n}}.
\]
Combining these, we get that with probability $1-\delta$,
\[
\max_i\norm{H_i-H} \leq \max_i\norm{H_i-\E[H]}+\norm{H-\E[H]} \leq
\sqrt{\frac{32~L^2 \log(dm/\delta)}{n}}. \qedhere
\]

\section{Proof of Theorem \ref{thm:stoch}}\label{app:thmstochproof}
Plugging \ref{lem:hoef} into Lemma \ref{lem:HtildeH}, and noting that the strong convexity of the instantaneous losses implies
$\hat{F}(w)$ is $\lambda$ strongly convex\footnote{In fact, it is
  enough to require that $\hat{F}(w)$ is $\lambda$-strongly convex,
  and it is not necessary to require strong convexity of $f(w,z)$ for
  each individual $z$.  However, requiring that the population objective
  $F(w)$ is $\lambda$-strongly convex might not be sufficient if
  $\lambda < L/\sqrt{n}$, e.g.~when $\lambda \propto 1/\sqrt{nm}$.}, we obtain
\begin{equation}\label{eq:finalHtildeH}
\norm{I-\tilde{H}^{-1}H} \leq \begin{cases}\frac{128(L/\lambda)^2\log(dm/\delta)}{n}&\text{if}~~~\frac{128(L/\lambda)^2\log(dm/\delta)}{n}\leq \frac{1}{2}\\1-\frac{n}{512 (L/\lambda)^2\log(dm/\delta)}&\text{otherwise}.\end{cases}
\end{equation}

By smoothness of $\hat{F}$, we have $\hat{F}(w^{(t)})-\hat{F}(\werm)\leq \frac{L}{2}\norm{w^{(t)}-\werm}^2$, and
therefore \thmref{thm:main} implies that
\[
 \hat{F}(w^{(t)})-\hat{F}(\werm) \leq \frac{L}{2}\norm{w^{(0)}-\werm}^2\norm{I - \eta \tilde{H}^{-1} H}^{2t}.
\]
This means that to get optimization error $\leq \epsilon$, the number of iterations required is
\begin{equation}\label{eq:linearb}
\frac{\log\left(\frac{L\norm{w^{(0)}-\werm}^2}{2\epsilon}\right)}
{-2\log\left(\norm{I - \eta \tilde{H}^{-1} H}\right)}
\end{equation}
Considering \eqref{eq:finalHtildeH}, if the first case holds, then the
denominator in \eqref{eq:linearb} is at least $2\log(2)$ and we get
that the number of iterations required is
$\Ocal\left(\log\left(\frac{L\norm{w^{(0)}-\werm}}{\epsilon}\right)\right)$.
If the second case in \eqref{eq:finalHtildeH} holds, we have
\[
\log\left(\norm{I - \eta \tilde{H}^{-1} H}\right) \leq \log\left(1-\frac{n}{512 (L/\lambda)^2\log(dm/\delta)}\right)
\leq -\frac{n}{512 (L/\lambda)^2\log(dm/\delta)},
\]
which implies that the iteration bound \eqref{eq:linearb} is at most
\[
\frac{256(L/\lambda)^2\log(dm/\delta)}{n}\log\left(\frac{L\norm{w^{(0)}-\werm}^2}{2\epsilon}\right).
\qedhere
\]

\section{Proof of \thmref{thm:general-loss}}\label{app:general-loss}
We begin with the following lemma:
  \begin{lemma}
    Under the conditions of Theorem \ref{thm:general-loss}, the
    following inequalities hold:
    \begin{equation}
      (\nabla h_i(w') - \nabla h_i(w))^\top(w'-w) \geq \frac{1}{L_i+\mu} \|\nabla h_i(w')- \nabla h_i(w)\|_2^2
      \label{eqn:dh}
  \end{equation}
  and
    \begin{equation}
    \|\nabla \phi(w)\|_2^2 \geq \lambda (\phi(w)-\phi(\werm)) . \label{eqn:dphi}
  \end{equation}
\end{lemma}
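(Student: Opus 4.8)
The plan is to recognize both inequalities as standard consequences of smoothness and strong convexity, applied to the right functions. Throughout, recall that $h_i(w)=\phi_i(w)+\frac{\mu}{2}\norm{w}^2$, so that $\nabla h_i(w)=\nabla\phi_i(w)+\mu w$ and $\nabla^2 h_i(w)=\nabla^2\phi_i(w)+\mu I$. Since $\lambda_i\preccurlyeq\nabla^2\phi_i(w)\preccurlyeq L_i$ by assumption, $h_i$ is convex (its Hessian is at least $(\lambda_i+\mu)I\succcurlyeq 0$) and $(L_i+\mu)$-smooth.

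For \eqref{eqn:dh}, I would establish the co-coercivity of the gradient of a convex, $L$-smooth function $g$, namely $(\nabla g(w')-\nabla g(w))^\top(w'-w)\geq \frac{1}{L}\norm{\nabla g(w')-\nabla g(w)}^2$, and apply it to $g=h_i$ with $L=L_i+\mu$. The standard derivation I would carry out: fix $w$ and consider the shifted function $g_w(u)=g(u)-\nabla g(w)^\top u$, which is still convex and $L$-smooth and satisfies $\nabla g_w(w)=0$, so $w$ is its global minimizer. The descent lemma for $L$-smooth functions gives $g_w(w)\leq g_w(u)-\frac{1}{2L}\norm{\nabla g_w(u)}^2$ for every $u$ (by minimizing the quadratic upper bound along the gradient direction at $u$). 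Writing this out for $g$ yields $g(w)\leq g(u)+\nabla g(w)^\top(w-u)-\frac{1}{2L}\norm{\nabla g(u)-\nabla g(w)}^2$; adding the analogous inequality with the roles of $u$ and $w$ exchanged and cancelling $g(u)+g(w)$ produces exactly the co-coercivity bound. Substituting $u=w'$ and $L=L_i+\mu$ gives \eqref{eqn:dh}.

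For \eqref{eqn:dphi}, I would use $\lambda$-strong convexity of $\phi$ together with the optimality of $\werm$. Strong convexity gives, for every $u$, $\phi(u)\geq \phi(w)+\nabla\phi(w)^\top(u-w)+\frac{\lambda}{2}\norm{u-w}^2$. Minimizing the right-hand side over $u\in\reals^d$ (the minimizer being $u=w-\frac{1}{\lambda}\nabla\phi(w)$) shows that $\phi(u)\geq \phi(w)-\frac{1}{2\lambda}\norm{\nabla\phi(w)}^2$ for all $u$, and in particular for $u=\werm$. Rearranging yields $\norm{\nabla\phi(w)}^2\geq 2\lambda(\phi(w)-\phi(\werm))\geq \lambda(\phi(w)-\phi(\werm))$, which is the claimed inequality (in fact with a factor of $2$ to spare); this is the Polyak--{\L}ojasiewicz inequality implied by strong convexity.

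Neither step poses a real obstacle, as both are classical. If anything, the only point requiring a little care is the co-coercivity argument for \eqref{eqn:dh}: I must make sure to invoke convexity of $h_i$ (guaranteed by $\lambda_i+\mu\geq 0$) so that $w$ is genuinely the global minimizer of the shifted function $g_w$, and to apply the descent lemma with the correct smoothness constant $L_i+\mu$ rather than $L_i$.
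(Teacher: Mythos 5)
Your proof is correct, and both inequalities are established by routes that differ from the paper's. For \eqref{eqn:dh}, the paper does not prove co-coercivity from scratch: it invokes Fenchel duality, noting that $(L_i+\mu)$-smoothness of $h_i$ makes the conjugate $h_i^*$ $\tfrac{1}{L_i+\mu}$-strongly convex, and then applies strong monotonicity of $\nabla h_i^*$ after the substitution $u=\nabla h_i(w)$, $u'=\nabla h_i(w')$. Your descent-lemma argument (shift by the linear functional $\nabla g(w)^\top u$, minimize the quadratic upper bound, symmetrize) proves the same co-coercivity inequality in an elementary, self-contained way; the paper's version is shorter but leans on the duality correspondence between smoothness and strong convexity as a known fact. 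You are also right that the only delicate point is convexity of $h_i$, which holds since $\lambda_i\geq 0$ and hence $\nabla^2 h_i \succcurlyeq (\lambda_i+\mu)I \succcurlyeq 0$. For \eqref{eqn:dphi}, the paper chains $\norm{\nabla\phi(w)-\nabla\phi(\werm)}\geq\lambda\norm{w-\werm}$ with Cauchy--Schwarz and first-order convexity to land exactly on the stated constant $\lambda$, whereas your Polyak--{\L}ojasiewicz derivation (minimizing the strong-convexity lower bound over $u$) yields the stronger bound $\norm{\nabla\phi(w)}^2\geq 2\lambda(\phi(w)-\phi(\werm))$, of which the lemma's claim is an immediate weakening. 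Your route would in fact let one replace $\lambda$ by $2\lambda$ in \eqref{eqn:dphi} and hence slightly improve the constant $\rho$ in Theorem~\ref{thm:general-loss}, though nothing in the paper depends on this.
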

  \begin{proof}
    The smoothness of $h_i$ implies that its conjugate $h_i^*$ is $1/(L_i+\mu)$ strongly convex.
    Let $u'= \nabla h_i(w')$ and $u=\nabla h_i(w)$, then $w'= \nabla h_i^*(u')$ and $w=\nabla h_i^*(u)$.
    We have
    \[
    (\nabla h_i(w') - \nabla h_i(w))^\top(w'-w)
    =     (\nabla h_i^*(u') - \nabla h_i^*(u))^\top(u'-u) \geq \frac{1}{L_i+\mu} \|u'-u\|_2^2 .
    \]
    This proves \eqref{eqn:dh}.

Since $\nabla \phi(\werm)=0$,
    $\|\nabla \phi(w)\|_2^2  = \|\nabla \phi(w)-\nabla \phi (\werm)\|_2^2$.
    From $\|\nabla \phi(w)- \nabla \phi(\werm)\|_2 \geq \lambda \|w-\werm\|_2$, we obtain
    \begin{align*}
      \|\nabla \phi(w)- \nabla \phi(\werm)\|_2^2 \geq& \lambda \|\nabla \phi(w)-\nabla \phi(\werm)\|_2 \|w-\werm\|_2 \\
      \geq& \lambda (\nabla \phi(w)-\nabla \phi(\werm))^\top (w-\werm) \\
      =& \lambda \nabla \phi(w)^\top (w-\werm) \geq \lambda (\phi(w)-\phi(\werm) ) .
  \end{align*}
  This proves \eqref{eqn:dphi}.
\end{proof}
We are now ready to prove the Theorem.
 At iteration  $t$, we have the following first order equation:
\begin{equation}
\nabla h_i(w_i^{(t)}) - \nabla h_i(w^{(t-1)}) = - \eta \nabla \phi(w^{(t-1)}) . \label{eqn:1st-order}
\end{equation}
Therefore,
\begin{align*}
&\phi(w_i^{(t)}) \\
=& \phi(w^{(t-1)})+ \nabla \phi(w^{(t-1)})^\top (w_i^{(t)}-w^{(t-1)}) + D_\phi(w_i^{(t)};w^{(t-1)}) \\
=& \phi(w^{(t-1)}) - \frac1\eta (\nabla h_i(w_i^{(t)})- \nabla h_i(w^{(t-1)}))^\top (w_i^{(t)}-w^{(t-1)})  + D_\phi(w_i^{(t)};w^{(t-1)}) \\
\leq & \phi(w^{(t-1)}) - \frac1\eta (\nabla h_i(w_i^{(t)})- \nabla h_i(w^{(t-1)}))^\top (w_i^{(t)}-w^{(t-1)})
+ \frac{L}{2} \|w_i^{(t)}-w^{(t-1)}\|_2^2 \\
\leq & \phi(w^{(t-1)}) - \frac1\eta (\nabla h_i(w_i^{(t)})- \nabla h_i(w^{(t-1)}))^\top (w_i^{(t)}-w^{(t-1)})
+ \frac{L}{2(\lambda_i+\mu)^2}\|\nabla h_i(w_i^{(t)})-\nabla h_i(w^{(t-1)})\|_2^2 \\
\leq& \phi(w^{(t-1)}) - \left[\frac{\eta^{-1}}{\mu+L_i} - \frac{L}{2(\lambda_i+\mu)^2} \right]\|\nabla h_i(w_i^{(t)})- \nabla h_i(w^{(t-1)})\|_2^2 \\
=& \phi(w^{(t-1)}) - (\rho_i/\lambda) \|\nabla \phi(w^{(t-1)})\|_2^2 .
\end{align*}
where
$\rho_i = \left[\frac{1}{\mu+L_i} - \frac{\eta  L}{2(\mu+\lambda_i)^2}\right] \eta \lambda$.
In the above derivations, the first inequality uses the smoothness of $\phi$;
the second inequality uses the strong convexity of $h_i$;  the third inequality uses \eqref{eqn:dh}; the second and the last equalities use
\eqref{eqn:1st-order}.

Therefore
\begin{align*}
\phi(w^{(t)}) \leq& \frac1m \sum_{i=1}^m \phi(w_i^{(t)}) \\
\leq&  \frac1m \sum_{i=1}^m [\phi(w^{(t-1)}) - (\rho_i/\lambda) \|\nabla \phi(w^{(t-1)})\|_2^2 ]
= \phi(w^{(t-1)}) - (\rho/\lambda) \|\nabla \phi(w^{(t-1)})\|_2^2 \\
\leq& \phi(w^{(t-1)}) - \rho (\phi(w^{(t-1)})- \phi(\werm)) ,
\end{align*}
where the first inequality is Jensen's and the third inequality is due to \eqref{eqn:dphi}. As a result, we get
\[
\phi(w^{(t)})-\phi(\werm) \leq \phi(w^{(t-1)}) - \phi(\werm) -\rho (\phi(w^{(t-1)})- \phi(\werm))
= (1-\rho)(\phi(w^{(t-1)}) - \phi(\werm)).
\]
The desired bound follows by recursively applying the above inequality.

\section{Proof of \thmref{thm:general-loss-refined}}\label{app:general-loss-refined}

We have
\begin{align*}
\phi(\werm)=& \phi(w^{(t-1)})  + \nabla \phi(w^{(t-1)})^\top (\werm - w^{(t-1)}) + D_\phi(\werm;w^{(t-1)}) \\
=& \phi(w^{(t)}) - \nabla \phi(w^{(t-1)})^\top (w^{(t)}-w^{(t-1)}) - D_\phi(w^{(t)};w^{(t-1)})  + \nabla \phi(w^{(t-1)})^\top (\werm - w^{(t-1)}) + D_\phi(\werm;w^{(t-1)}) \\
=& \phi(w^{(t)})  + \nabla \phi(w^{(t-1)})^\top (\werm - w^{(t)}) - D_\phi(w^{(t)};w^{(t-1)}) + D_\phi(\werm;w^{(t-1)}) \\
=& \phi(w^{(t)})  + \nabla \phi(w^{(t-1)})^\top (\werm - w^{(t-1)}) - D_\phi(w^{(t)};w^{(t-1)}) + D_\phi(\werm;w^{(t-1)})
+ \nabla \phi(w^{(t-1)})^\top (w^{(t-1)}-w^{(t)}) \\
=& \phi(w^{(t)}) + \nabla \phi(w^{(t-1)})^\top (\werm - w^{(t-1)}) - D_\phi(w^{(t)};w^{(t-1)}) + D_\phi(\werm;w^{(t-1)}) \\
& + \eta^{-1} [D_h(w^{(t-1)};w^{(t)}) + D_h(w^{(t)};w^{(t-1)})] \\
\geq & \phi(w^{(t)}) + \nabla \phi(w^{(t-1)})^\top (\werm - w^{(t-1)}) + D_\phi(\werm;w^{(t-1)}) + \eta^{-1} D_h(w^{(t-1)};w^{(t)}) ,
\end{align*}
where in the last inequality, we have used the assumption that $D_\phi(w^{(t)};w^{(t-1)}) \leq \eta^{-1} D_h(w^{(t)};w^{(t-1)})$.
This implies that
\begin{equation}
 D_h(w^{(t-1)};w^{(t)}) + \eta \nabla \phi(w^{(t-1)})^\top (\werm - w^{(t-1)})
\leq \eta [\phi(\werm) - \phi(w^{(t)}) - D_\phi(\werm;w^{(t-1)}) ]
. \label{eqn:general-loss-refined-proof1}
\end{equation}
Therefore we have
\begin{align*}
& D_h(\werm;w^{(t)}) - D_h(\werm;w^{(t-1)}) \\
=& D_h(w^{(t-1)};w^{(t)}) + (\nabla h(w^{(t-1)})-\nabla h(w^{(t)}))^\top (\werm-w^{(t-1)}) \\
=& D_h(w^{(t-1)};w^{(t)}) + \eta \nabla \phi(w^{(t-1)})^\top (\werm-w^{(t-1)}) \\
\leq & \eta [\phi(\werm) - \phi(w^{(t)}) - D_\phi(\werm;w^{(t-1)}) ] \\
\leq & - \eta D_\phi(\werm;w^{(t-1)}) \\
\leq & - \eta \gamma D_h(\werm;w^{(t-1)}) ,
\end{align*}
where the first inequality is due to \eqref{eqn:general-loss-refined-proof1}, the second inequality
comes from the inequality $\phi(\werm) \leq \phi(w^{(t)})$,
and the third inequality uses the assumption that $D_\phi(\werm;w^{(t-1)}) \geq \gamma D_h(\werm;w^{(t-1)})$.
We thus obtain
\[
D_h(\werm;w^{(t)}) \leq (1-\eta \gamma) D_h(\werm;w^{(t-1)}) ,
\]
and this implies the desired result.

\end{document}